\documentclass[12pt]{article}

\usepackage[margin=1in]{geometry}
\usepackage{setspace}
\usepackage{amsmath}
\usepackage{amssymb}
\usepackage{amsthm}
\usepackage{xcolor}
\usepackage{subfig}
\usepackage{csvsimple}
\usepackage{graphicx}
\usepackage{rotating}
\usepackage{todonotes}
\usepackage[colorlinks=false, pdfborder={0 0 0}]{hyperref}
\usepackage{booktabs}
\usepackage{booktabs}
\usepackage{multirow}
\usepackage{authblk}

\usepackage[title]{appendix}
\usepackage[normalem]{ulem} 

\newcommand{\real}{\mathbb{R}}
\newcommand{\complex}{\mathbb{C}}

\newcommand{\norm}[2][]{\| #2 \|_{#1}}
\newcommand{\priorpower}[1]{W_{#1}}
\DeclareMathOperator*{\argmin}{arg\,min}
\newcommand{\SumOnetoN}[1][i]{\sum_{#1=1}^{n}}
\newcommand{\abs}[1]{|#1|}

\newcommand{\imageZeroEstMean}[1][i]{Y_{#1} - P_{#1} \hat{\mu}}
\newcommand{\noiseCovar}[0]{\sigma^2 I}
\newcommand{\projectedCovar}[1][i]{P_{#1} \Sigma P_{#1} ^*}
\newcommand{\matrixSpace}[3][\real]{#1^{#2 \times #3}}
\NewDocumentCommand{\SumRank}{O{i} O{r}}{\sum_{#1=1}^{#2}}
\newcommand{\inprod}[3][]{\langle #2, #3 \rangle_{#1}}
\newcommand{\projectedEigen}[1][j]{P_i v_{#1}}
\newcommand{\regWeightedEigen}[2][l]{\sqrt{r_#1} \odot v_#2}
\NewDocumentCommand{\der}{O{} m}{%
	\frac{\partial #1}{\partial #2}%
}
\newcommand{\oneover}[1]{\frac{1}{#1}}
\newcommand{\bigparants}[1]{\left( #1 \right)}
\newcommand{\approxSigma}{\hat{\Sigma}}
\newcommand{\diag}{\operatorname{diag}}
\newcommand{\projectedEigenMat}[1][i]{P_{#1}V}
\newcommand{\approxMu}{\hat{\mu}}
\newcommand{\best}[1]{\textbf{#1}} 
\newcommand{\secondbest}[1]{\underline{#1}} 
\newcommand{\projectedCovarEigenForm}[1][j]{\SumRank[#1] \projectedEigen[#1] (\projectedEigen[#1])^*}
\newcommand{\conj}[1]{\overline{#1}}
\newcommand{\backprojectedEigen}[1][j]{P_i ^* P_i v_{#1}}
\newcommand{\backprojectedImageZeroMeanEst}[1][i]{P_{#1}^* (\imageZeroEstMean[#1])}
\newcommand{\floor}[1]{\lfloor #1 \rfloor}
\newcommand{\ceil}[1]{\lceil #1 \rceil}
\newcommand{\MN}[1][]{M_N^{#1}}

\newtheorem{theorem}{Theorem}[section]
\newtheorem{lemma}[theorem]{Lemma}

\newcommand{\hide}[1]{}

\newcommand{\algname}{SOLVAR}

\title{SOLVAR: Fast covariance-based heterogeneity analysis with pose refinement for cryo-EM}
\author{
Roey Yadgar\textsuperscript{1},
Roy R. Lederman\textsuperscript{2},
and Yoel Shkolnisky\textsuperscript{3}
}

\date{\today}

\begin{document}
	
\maketitle
\footnotetext[1]{Department of Applied Mathematics, Yale University}
\footnotetext[2]{Department of Statistics and Data Science, Yale University}
\footnotetext[3]{School of Mathematical Sciences, Tel Aviv University}
\begin{abstract}
Cryo-electron microscopy (cryo-EM) has emerged as a powerful technique for resolving the three-dimensional structures of macromolecules. 
A key challenge in cryo-EM is characterizing continuous heterogeneity, where molecules adopt a continuum of conformational states. 
Covariance-based methods offer a principled approach to modeling structural variability. However, estimating the covariance matrix efficiently remains a challenging computational task.
In this paper, we present \algname~(Stochastic Optimization for Low-rank Variability Analysis), which leverages a low-rank assumption on the covariance matrix to provide a tractable estimator for its principal components, despite the apparently prohibitive large size of the covariance matrix. Under this low-rank assumption, our estimator can be formulated as an optimization problem that can be solved quickly and accurately.
Moreover, our framework enables refinement of the poses of the input particle images, a capability absent from most heterogeneity-analysis methods, and all covariance-based methods.
Numerical experiments on both synthetic and experimental datasets demonstrate that the algorithm accurately captures dominant components of variability while maintaining computational efficiency. \algname~achieves state-of-the-art performance across multiple datasets in a recent heterogeneity benchmark. The code of the algorithm is freely available at \href{https://github.com/RoeyYadgar/SOLVAR}{https://github.com/RoeyYadgar/SOLVAR}.
\end{abstract}

\section{Introduction}
Cryo-electron microscopy (cryo-EM) single-particle analysis is a method for determining the high-resolution structures of molecules, enabling biologists to analyze their function. In this method, copies of a molecule are suspended in a thin layer of vitrified ice, which preserves them in their native state, with each copy assuming a random orientation and position in the ice layer~\cite{CRYO-EMintro}. An electron microscope then captures two-dimensional noisy projection images of these molecules, resulting in up to several million particle images, which are the input to the subsequent data processing pipeline.

The classic simplified model of analyzing cryo-EM data often assumes that all the imaged particles are exact copies of the same molecular structure.
However, in practice, biological molecules exhibit structural heterogeneity that is often crucial to the biological processes in which they participate. In such cases, the imaged molecules can no longer be considered to be identical copies. 
Traditional cryo-EM analysis software models the heterogeneity as a small number of discrete states; while this approach has been very successful in many applications, it cannot resolve more complex dynamics exhibited by many molecular complexes, nor achieve high-resolution reconstructions. It is therefore necessary to develop new techniques that can accurately recover the molecule's structure while accommodating its inherent variability. This problem, known as the ''continuous heterogeneity problem,'' is an active and challenging research area.

In this paper, we introduce \algname, a new method for analyzing structural heterogeneity in cryo-EM. Our method uses the covariance based approach to heterogeneity analysis; this approach has significant theoretical advantages \cite{CovarianceMatrixEstimationKatsevich} and has been shown to outperform other methods in recent benchmarks.
Traditionally, covariance-based methods were limited in their resolution due to the sheer size of the covariance matrix they attempted to compute. More recent covariance-based methods \cite{Tagare2015-vr,3dva,recovar} tackle this limitation by exploiting low dimensional structure of the covariance matrix and estimate the matrix's principal components directly.
Despite the advancements in resolution, all covariance-based methods (and most other methods) share a critical limitation: they all require the user to provide particle images' poses, and they are inherently unable to refine these initial poses. In realistic pipelines, such poses are computed by software that ignores the heterogeneity in the data; these estimates are distorted by heterogeneity, and existing covariance-based methods are unable to correct the error.  
Building on this line of work, \algname~leverages the low rank of the covariance to reformulate the estimation into an optimization problem, which enables the direct estimation of principal components using gradient-based methods. Unlike prior methods, our framework readily accommodates particle image poses within the optimization problem.
We show, using recent benchmarks, that our algorithm, SOLVAR, outperforms existing algorithms while remaining computationally efficient.

\section{Background and Related Work}

Cryo-EM particle images are tomographic projections of the electrostatic potential of the imaged biomolecules. For convenience, we consider the particle images and volumes in Fourier space.
We will denote the Fourier transform of a particle image by $Y_i\in \complex^{N \times N}$, where $N \times N$ are the dimensions in pixels of each particle image. For brevity, we refer to the Fourier transform of a particle image also as the particle image. We will denote the Fourier transform of the electrostatic potential of the biomolecule by $X_i\in \complex^{N \times N \times N}$ and refer to it as the volume or structure.
The main advantage of the Fourier domain representation is that the complicated tomographic projection in real space can be described as a simple 2-d slice through the center of the 3-d space (the Fourier slice theorem~\cite{doi:10.1137/1.9780898719284}).

Using this notation, the imaging model under the usual weak phase assumption \cite{SCHERES2012406_relion_EM} can be written as
\begin{equation}\label{eq:ImageFormModelSingle}
	Y_i = a_i C_i T_{t_i} P_{\phi_i} X_i + e_i,
\end{equation}
where $P_{\phi_i}$ is the 2-d slice operator corresponding to orientation $\phi_i$, $T_{t_i}$ is the in-plane translation operator (in the Fourier domain) corresponding to the in-plane shift $t_i$ from the center of the particle, $C_i$ is the contrast transfer function (CTF), which is a filter related to the imaging procedure, $a_i$ is the per-image contrast scaling factor, and $e_i$ is additive Gaussian noise which we will assume throughout the paper to be white $e_i \sim N(0,\sigma^2 I)$. In practice, a preprocessing step is performed to whiten the particle stack and estimate the variance $\sigma^2$.
For the sake of simplicity, since all the transformations applied to the volume~$X_{i}$ are linear, we will merge them into a single operator $P_i = a_i C_iT_iP_{\phi_i}$ to which we will refer as the projection operator. Using this notation, the imaging model is given by
\begin{equation}\label{eq:ImageFormModel}
    Y_i = P_i X_i + e_i.
\end{equation}
We refer to the pair~$(\phi_i,t_i)$ as the pose of the $i$-th image.
The classic simplified model for analyzing cryo-EM data often assumes homogeneous data, i.e., all the imaged particles are exact copies of the same molecular structure, that is, $X_i = X$ for all~$i$. 

If the pose corresponding to each particle image is known, the problem of estimating the volume $X$ is a linear inverse problem with known matrices $P_i$. In this case, the problem can be written as a standard least squares problem with a closed form solution~\cite{SCHERES2012406_relion_EM, Punjani2017-ed, Anden_2015}. However, the poses are unknown, therefore, they must be jointly estimated with the structure~$X$. Standard homogeneous refinement algorithms employ Expectation-Maximization (EM) to iteratively update the estimated poses and structure~\cite{SCHERES2012406_relion_EM, SCHERES2012519_RELION_FSC, Punjani2017-ed}.

In a more complex heterogeneous setting, the structure $X_i$ corresponding to each particle image is different. The traditional solution is to model the heterogeneity as discrete, where there is a small number $M$ of distinct structures $\{ V_1, ... V_M\}$, and each imaged structure is a copy of one of these structures $X_i \in \{ V_1, ... V_M\}$.
This approach is implemented in some of the most popular cryo-EM data processing packages~\cite{SCHERES2012519_RELION_FSC, Punjani2017-ed}.
While this approach has been successfully used to uncover structures in many datasets with limited discrete heterogeneity, it is inadequate for analyzing continuous conformational changes.

In recent years, various approaches have been proposed for analyzing continuous heterogeneity, including, among others, manifold learning (e.g., \cite{doi:10.1073/pnas.1419276111, FRANK201661, Maji2020, Seitz2022-rb(ManifoldEM), Moscovich2020-er}), normal modes (e.g., \cite{JIN2014496, Harastani2021-iq,9616150, VUILLEMOT2022167483}), 
covariance-based methods (e.g.,\cite{CovarianceMatrixEstimationKatsevich,Anden_2015,andén2018structuralvariabilitynoisytomographic, 3dva,recovar}), non-linear methods (e.g., \cite{lederman2017continuouslyheterogeneoushyperobjectscryoem,DBLP:journals/corr/abs-1907-01589}),
and deep learning based non-linear approaches (e.g.,\cite{CryoDRGNZhong2021-nj,Levy2024.05.30.596729,Luo2023}).
These methods attempt to recover some low-dimensional description of plausible structures and their distribution.
Detailed surveys of continuous heterogeneity analysis methods can be found in~\cite{Sorzano2019-lp, TOADER2023168020}.

Many recent algorithms for analyzing continuous heterogeneity in cryo-EM use ideas and components from machine learning and neural networks. 
One notable line of algorithms is CryoDRGN \cite{CryoDRGNZhong2021-nj} and CryoDRGN2 \cite{cryodrgn2}, which are based on a Variational Autoencoder (VAE) architecture; the recent versions can estimate the poses of particle images in the ab-initio setting (where the user provides no initial pose estimates). CryoDRGN’s success sparked the development of many subsequent deep learning-based approaches. An overview of these methods can be found in \cite{Donnat2022-uy, Tang2023-ki}.

Our paper follows a linear approach to representing the continuum of structures~$X_{i}$, where different conformations are linear combinations of ``basis'' volumes, and are given explicitly by
\begin{equation}\label{eq:linear model}
	X_i = X_0 + \sum_{j=1}^{k} z_{i,j} v_j,
\end{equation}
where $X_0$ is the mean of the distribution describing the heterogeneity in the volumes,  $v_j$~are basis volumes, $z_{i,j}$ are the expansion coefficients in that basis (the latent variables), and~$k$ is the dimension of the model. 
If the volumes~$X_{i}$ were available, a set of basis vectors~$v_{j}$ could have been estimated (in principle) by estimating the covariance matrix of the volumes and using standard PCA~\cite{Hotelling1933AnalysisOA}, with the $v_j$s being the principal components of the covariance matrix. However, we recall that all we have are the particle images, $\{Y_i\}_{i=1}^n$, not the volumes, $\{X_i\}_{i=1}^n$, which we would need to compute the PCA of the volumes. Covariance-based methods in cryo-EM attempt to estimate the covariance matrix of the volumes from the input particle images and then apply PCA.
Earlier work on the covariance-based approach \cite{Penczek2011IdentifyingCS} attempted to construct multiple volumes from subsets of particle images and compute the PCA from these volumes. 
Later works (e.g., \cite{CovarianceMatrixEstimationKatsevich, Anden_2015, andén2018structuralvariabilitynoisytomographic}) revealed the remarkable fact that if the poses are known, the mean volume and the covariance matrix for computing the PCA can be estimated directly from the observed particle images $\{Y_i\}_{i=1}^{n}$ using a least squares estimation approach, which yields the explicit estimates for the mean~$\hat{\mu}$ and covariance~$\hat{\Sigma}$ of the volumes
\begin{align}
\hat{\mu} &= \argmin_{\mu} \SumOnetoN \norm{Y_i - P_i \mu}^2  + \norm[\priorpower{\mu}^{-1}]{\mu}^2, \label{eq:mean estimator}\\
\hat{\Sigma} &= \argmin_\Sigma \SumOnetoN \norm[F]{(\imageZeroEstMean)(\imageZeroEstMean)^* - \projectedCovar - \noiseCovar}^2 +
\sum_{i,j} \abs{\Sigma_{i,j}}^2 {(R_\Sigma)_{i,j}}, \label{eq:CovarLSDefinition}
\end{align}
where $\norm[\priorpower{\mu}^{-1}]{\mu}^2 = \mu^* \priorpower{\mu}^{-1} \mu$, and $\sum_{i,j} \abs{\Sigma_{i,j}}^2 {(R_\Sigma)_{i,j}}$ are regularization terms (given in detail in Appendix~\ref{sec:CovarFSCReg}).

The methods cited above estimate the entire covariance matrix of the volumes, whose size is prohibitively large for practical computations ($N^3 \times N^3$), which limited their use to very low resolutions. Subsequent covariance-based methods~\cite{Tagare2015-vr,3dva,recovar}, which have proven to be highly effective, tackle this limitation by avoiding estimating the full covariance matrix in order to approximate its eigenvectors~$v_{j}$.
The authors of \cite{Tagare2015-vr,3dva} take the PPCA (Probabilistic PCA) approach, in which the $v_{j}$s are estimated by maximizing the likelihood of the observed data. This maximization is implemented using the EM algorithm by marginalizing over the posterior distribution of the latent variables $z_{i,j}$.
The recent RECOVAR \cite{recovar} introduces new ideas to the least squares estimation approach, which together overcome many of the traditional limitations of covariance-based methods. First, it uses the particle images directly and does not estimate the full covariance matrix. Second, it is very efficient computationally and produces high-resolution volumes. Finally, it provides elaborate tools for interpreting the conformational landscape that were not available in previous software. RECOVAR has been shown to outperform many other algorithms in recent benchmarks \cite{Cryobenchjeon2025cryobenchdiversechallengingdatasets}.

Unfortunately, all existing covariance-based algorithms share a fundamental limitation: they require the user to provide the pose of each particle image as input, and cannot update these parameters. 
In practice, users must first apply a homogeneous refinement process to the input particle images, ignoring heterogeneity, to obtain estimates of the rotations and offsets, and provide these estimates to the heterogeneity analysis algorithms.
As heterogeneity becomes more complex, alignment of the particle images becomes less accurate (and, in fact, the concept of ``correct alignment'' across conformations is poorly defined), which can lead to significant errors and lower the resolution of the estimated structures.

In this paper, we introduce \algname\ -- Stochastic Optimization for Low-rank Variability Analysis -- that efficiently estimates the eigenvectors of the covariance matrix of the conformations' distribution by exploiting a special mathematical structure of the estimation problem.

The main novel contributions in \algname\ are: 
1)~Reformulating~\eqref{eq:CovarLSDefinition} so it is amenable to efficient gradient-based optimization.
2)~Introducing a maximum likelihood alternative to~\eqref{eq:CovarLSDefinition} and to the formulations in~\cite{Tagare2015-vr,3dva}.
3)~Introducing pose refinement into the covariance-based approach.

\section{\algname~algorithm}\label{sec:SOLVAR}

Despite the fact that the optimization problem in ~\eqref{eq:CovarLSDefinition} is a standard least squares problem, which is convex and has a closed form solution, the covariance matrix $\Sigma \in \matrixSpace[\complex]{N^3}{N^3}$ is prohibitively large, containing $N^6$ elements, which precludes its estimation for even moderate~$N$, both in terms of memory and run time complexity. We show below that it is possible to reformulate~\eqref{eq:CovarLSDefinition} as an optimization over the eigenvectors of~$\Sigma$ directly by imposing a low-rank assumption $r \ll N^3$ on~$\Sigma$. We also show that we can efficiently evaluate the resulting objective function and its gradients, enabling us to use stochastic gradient descent (SGD) to optimize it quickly.

To construct a low rank approximation of~$\Sigma$, we substitute $\Sigma = \sum_{i=1}^{r} v_i v_i^{*}$ into~\eqref{eq:CovarLSDefinition}, resulting in the optimization problem (see Appendix~\ref{subsec:LSDerivation} for a detailed derivation)
\begin{equation}\label{eq:CovarLSLowRank}
	\begin{aligned}
\hat{v}_1,\ldots,\hat{v}_r &= \argmin_{v_1,\ldots,v_r} f^{LS}(v_1,\ldots,v_r), \\
f^{LS}(v_1,\ldots,v_r) &= 
	\SumOnetoN \Biggl( \norm{\imageZeroEstMean}^4 -
2 (\SumRank[j] \inprod{\imageZeroEstMean}{\projectedEigen[j]}^2 + \sigma^2 \norm{\imageZeroEstMean}^2) \\
& \qquad\qquad +\SumRank[j,k] \inprod{\projectedEigen[j]}{\projectedEigen[k]}^2 + 2 \sigma^2  \SumRank[j] \norm{\projectedEigen[j]}^2 + \sigma^4 N^2 \Biggr)  \\
& \qquad\qquad +\sum_{l=1}^{m}\SumRank[j,k] \inprod{\regWeightedEigen{j}}{\regWeightedEigen{k}}^2,
 \end{aligned}
\end{equation}
with the derivative of the objective function~$f^{LS}$ given by
\begin{equation}\label{eq:CovarEigenLSDerivative}
\begin{aligned}
\der[f^{LS}]{v_k} &= 4 \Biggl[ 
\oneover{n} \SumOnetoN P_i ^* \Biggl( 
\SumRank[j] \inprod{\projectedEigen[j]}{\projectedEigen[k]} \projectedEigen[j]  \\
& \qquad \qquad \qquad - \inprod{\imageZeroEstMean}{\projectedEigen[k]}(\imageZeroEstMean)
+ \sigma^2 \projectedEigen[k] \Biggr) \\
&\qquad \qquad \qquad + \sum_{l=1}^{m} \SumRank[j] \inprod{\regWeightedEigen{j}}{\regWeightedEigen{k}} (r_l \odot v_j)
\Biggr],
\end{aligned}
\end{equation}
where the vectors $r_{l}$ are a decomposition of the regularization term $R_\Sigma$ (see~\eqref{eq:CovarLSDefinition}) such that $R_\Sigma=\sum_{l=1}^m r_l r_l ^T$, and~$\odot$ is element-wise multiplication. While the objective $f^{LS}$ in~\eqref{eq:CovarLSLowRank} is not convex and its optimum has no closed form (in contrast to~\eqref{eq:CovarLSDefinition}), evaluating~\eqref{eq:CovarLSLowRank} and~\eqref{eq:CovarEigenLSDerivative} requires $O(nr^2N^2 + mr^2N^3)$ operations, which leads to a total time complexity of $O(K(nr^2N^2 + (nm/B) r^2N^3))$ operations when using SGD with~$K$ epochs and a mini-batch size of~$B$ particle images (see Appendix~\ref{subsec:LSDerivation}). We note that the total time complexity scales linearly with the size $N^3$ of the principal components $v_j \in \complex^{N^3}$.

Moreover, optimizing~\eqref{eq:CovarLSLowRank} via SGD using~\eqref{eq:CovarEigenLSDerivative} only requires applying the operators~$P_{i}$ and their adjoints to vectors, regardless of how these operators are represented or stored. Thus, while other methods (such as~\cite{recovar,3dva}) are restricted to using the crude nearest-neighbor numerical interpolation in the implementation of the projection operator $P_i$ (see details in~\cite{recovar}),
our method can use any mathematical formulation of~$P_{i}$, trading accuracy for speed as needed. For improved accuracy, our default implementation uses trilinear interpolation with an oversampling factor of 2, a relatively common choice in cryo-EM algorithms that are not subject to the nearest-neighbor limitation.
Our implementation allows the user to choose between the nearest-neighbors interpolation, trilinear interpolation, or the NUFFT algorithm~\cite{nufft_doi:10.1137/S003614450343200X}; see~\cite{YANG2008959, Toader2025-tu} for a discussion on interpolation methods and Appendix \ref{sec:algRunTime} for runtime comparisons.

We note that the vectors $v_j$ are not required to be orthogonal for the matrix $\Sigma = \sum_{j=1}^{r} v_j v_j^T$ to be low rank and for the arguments throughout this paper. 
Therefore, we do not impose the orthogonality constraint during training and instead orthogonalize the final estimate of $\Sigma$ via SVD.

Effective, frequency-dependent, regularization has been observed to be crucial in homogeneous reconstruction methods such as~\cite{SCHERES2012519_RELION_FSC, Punjani2017-ed}, and covariance-based methods \cite{andén2018structuralvariabilitynoisytomographic}. 
A~regularization method based on splitting the data into two halves was introduced in~\cite{SCHERES2012519_RELION_FSC}, and generalized in RECOVAR~\cite{recovar} for the covariance case.
In this work, we follow RECOVAR's approach to the regularization term $R_\Sigma$ in~\eqref{eq:CovarLSDefinition}, with some modifications described in Appendix~\ref{sec:CovarFSCReg}.

\subsection{Maximum likelihood estimator}

Later in this paper, we introduce an algorithm for updating the particles' poses (section~\ref{subsec:PoseOptimization}), thereby removing a critical limitation of previous covariance-based algorithms, which rely on unrealistic pose estimates produced by homogeneous refinement algorithms.
A straightforward attempt to use the least-squares estimator~\eqref{eq:CovarLSDefinition} to jointly estimate both the principal components and the poses fails to achieve satisfactory performance (see Appendix~\ref{subsec:PoseOptDetails}). Motivated by this observation, we propose an alternative estimator that enables effective pose refinement and performs well in our experiments.

We define the maximum likelihood estimator of $\Sigma$ under the assumption that the heterogeneity is Gaussian (i.e., $X \sim N(\mu,\Sigma)$) by
\begin{multline}\label{eq:CovarMLDefinition}
\approxSigma = \argmin_{\Sigma} \SumOnetoN \Biggl((\imageZeroEstMean)^* (\projectedCovar + \noiseCovar)^{-1} (\imageZeroEstMean) 
+ \log \abs{\projectedCovar + \noiseCovar}\Biggr).
\end{multline}
While the assumption of Gaussian heterogeneity is not physically grounded and somewhat arbitrary, it is closely related to commonly used models in cryo-EM software (see, for example,~\cite{SCHERES2012519_RELION_FSC}); the resulting estimator is consistent with the particle images' formation model, and we observe that it performs well empirically. 
We add to \eqref{eq:CovarMLDefinition} a Gaussian prior $\SumRank[j] \norm{\sqrt{R_V} \odot v_j}^2$ for the eigenvectors of $\Sigma$, with $R_V = \sqrt{\diag(R_\Sigma)}$ estimated as in~\eqref{eq:CovarLSDefinition} (see Appendix~\ref{sec:CovarFSCReg}).

The optimization problem~\eqref{eq:CovarMLDefinition} does not have a closed-form solution, and it is computationally expensive since it involves inverting $N^2 \times N^2$ matrices. However, making the same low rank substitution $\Sigma = \sum_{i=1}^{r} v_i v_i^{*} = VV^*$, $V \in \matrixSpace[\complex]{N^3}{r}$, as before yields the objective function
\begin{multline}\label{eq:ML objective}
f^{ML}(V)  = \SumOnetoN \Biggl( 
\oneover{\sigma^2} \Biggl[
\norm{\imageZeroEstMean}^2 - 
\oneover{\sigma^2}(\imageZeroEstMean)^*(\projectedEigenMat)M_i^{-1}(\projectedEigenMat)^*(\imageZeroEstMean)\Biggr] \\
+\log |M_i| + N^2\log \sigma ^2 \Biggr)
+ \SumRank[j] \norm{\sqrt{R_V} \odot v_j}^2,
\end{multline}
where $M_i = I +  \oneover{\sigma^2}(\projectedEigenMat)^* \projectedEigenMat  \in \matrixSpace[\complex]{r}{r}$.
Optimizing the latter using stochastic optimization requires $O(K(n(r^2N^2 + r^3) + n/B r^2N^3))$ operations (see Appendix~\ref{subsec:MLDerivation} for a derivation of~\eqref{eq:ML objective} and the computational complexity of its optimization). Despite the $r^{3}$ term in the latter complexity, the practical runtime of the optimization is dominated by the computation of~$P_{i}$ and~$P_{i}^{*}$. So the time required to optimize~\eqref{eq:CovarLSLowRank} and~\eqref{eq:ML objective} is essentially the same (see Figure~\ref{fig:algRunTime}).

We note that~\cite{Tagare2015-vr,3dva} also solve a maximum likelihood problem. We describe the differences between these methods and SOLVAR's maximum likelihood estimation in Appendix~\ref{sec:CovObjComparison}.

\subsection{Optimization over particles' pose and contrast}\label{subsec:PoseOptimization}

As mentioned above, a typical assumption in many heterogeneity analysis algorithms, and all previous covariance-based heterogeneity algorithms, is that the poses (and usually the contrast) of the input images are known, so that the operators $P_{i} = a_i C_i T_{t_i} P_{\phi_i}$ in~\eqref{eq:ImageFormModel} are known. 
In typical pipelines for recent heterogeneity analysis algorithms, the poses $(\phi_i,t_i)$ are estimated by homogeneous refinement of the entire set of particle images before the heterogeneity analysis algorithms are invoked. However, given that the data is heterogeneous, the estimated poses are inevitably inaccurate. These inaccuracies reduce the resolution of the estimated heterogeneity and can even create spurious heterogeneity not present in the data, where an algorithm incorrectly interprets global rotation and offset as part of the structural variability.

We incorporate the particles' poses into the maximum likelihood objective~\eqref{eq:ML objective} by applying gradient descent on the particle orientations~$\phi_{i}$, and a second-order method for the particle offsets~$t_{i}$ (see Appendix~\ref{subsec:PoseOptDetails}). Additionally, we update the estimated mean volume~$\approxMu$ of the data by performing homogeneous reconstruction every $5$ epochs.
The incorporation of pose parameters renders the optimization problem susceptible to local minima, an effect that is more pronounced in the high-frequency regime, as discussed in \cite{barnett2017rapidsolutioncryoemreconstruction}.
To mitigate this effect, we apply a frequency-marching-like approach, where we apply a lowpass filter to the estimated principal components (the columns $V$ in~\eqref{eq:ML objective}) during the optimization process, and gradually increase the cutoff frequency (until arriving at the Nyquist frequency).

As pointed out in~\cite{recovar}, accurate estimation of particle images' contrasts is another important componant in the analysis of heterogeneity.
To incorporate contrast estimation into our algorithm, we apply the contrast estimation approach of~\cite{3dva} (see Appendix~\ref{subsec:PoseOptDetails}) every 5 epochs.

\section{Results}

In this section, we demonstrate \algname~on synthetic and experimental cryo-EM data.
We use \algname~to estimate the principal components of the volumes and update the pose estimates of the particle images.

As noted in \cite{sorzano_principal_2021}, reconstructing volumes with the straightforward PCA approach~\eqref{eq:linear model} requires sufficiently high SNR. RECOVAR \cite{recovar} circumvents the problem through clever use of a subset of the original particle images, which is carefully chosen and weighted based on the principal components (see \cite{recovar} for details); we adopt RECOVAR's procedure using the principal components and pose estimates produced by \algname.

When visualizing reconstructed volumes in the figures below, we use \texttt{relion\_postprocess}, which estimates the B-factor from reconstructed half-maps (according to~\cite{Rosenthal2003-le}) and sharpens the final volume. Volume figures are generated using the ChimeraX software \cite{Pettersen2021}.

\subsection{Results on synthetic datasets}

We evaluate \algname~using the synthetic datasets published in the CryoBench~\cite{Cryobenchjeon2025cryobenchdiversechallengingdatasets} benchmark, which contains 5 datasets with different types of heterogeneity. 
Since the CryoBench synthetic tests use the ground-truth poses, we do not examine the pose optimization of \algname~in this set of tests.

Comparing the outputs of different algorithms for analyzing heterogeneity in cryo-EM turns out to be a difficult problem in itself, and there is not yet a consensus in the community on the best metric.
CryoBench proposes to evaluate different methods using the Area Under the Curve (AUC) of the ``per-image FSC'' based on the Fourier Shell Correlation (FSC) metric that is very popular in cryo-EM; for details, see~\cite{Cryobenchjeon2025cryobenchdiversechallengingdatasets}.
We follow CryoBench's approach, but we note that this metric is susceptible to bias (as highlighted by~\cite{recovar}) due to the misleading nature of the FSC curve in heterogeneity analysis, where, for example, two distinct conformations could still have high correlation.

For each dataset, we use the maximum-likelihood estimator~\eqref{eq:ML objective} (with the given ground truth poses) and estimate the $r=10$ leading principal components. Table~\ref{tab:cryobenchFSCAUC} shows the AUC of the per-image FSC of our approach together with other state-of-the-art methods evaluated in CryoBench. We observe that our method achieves the highest score for most of Cryobench's datasets. We note that while CryoDRGN outperforms \algname~with the default $r=10$ rank parameter, using ~\algname~ with a higher value of $r$, which is more appropriate for a dataset with $100$ discrete classes, yields state-of-the art results; see Appendix \ref{subsec:tomotwin_appendix}.

\begin{table*}
\centering
\small
\scalebox{0.6}{
\begin{tabular}{lcccccccccc}
\toprule
\multirow{2}{*}{Method}
& \multicolumn{2}{c}{IgG-1D}
& \multicolumn{2}{c}{IgG-RL}
& \multicolumn{2}{c}{Ribosembly}
& \multicolumn{2}{c}{Tomotwin-100}
& \multicolumn{2}{c}{Spike-MD} \\
\cmidrule(lr){2-3}\cmidrule(lr){4-5}\cmidrule(lr){6-7}\cmidrule(lr){8-9}\cmidrule(lr){10-11}
& Mean (std) & Median
& Mean (std) & Median
& Mean (std) & Median
& Mean (std) & Median
& Mean (std) & Median \\
\midrule
CryoDRGN
& 0.351 (0.028) & 0.356
& 0.331 (0.016) & 0.333
& 0.412 (0.023) & 0.415
& \best{0.316 (0.046)} & \best{0.321}
& 0.340 (0.009) & 0.340
\\
CryoDRGN-AI-fixed
& 0.364 (0.002) & 0.364
& 0.348 (0.012) & 0.350
& 0.372 (0.032) &  0.375
& 0.202 (0.044) &  0.207
& 0.301 (0.012) & 0.303
\\
Opus-DSD
& 0.335 (0.026) & 0.339
& 0.343 (0.016) & 0.346
& 0.362 (0.083) &  0.382
& 0.237 (0.049) &  0.251
& 0.229 (0.027) & 0.242
\\
3DFlex
& 0.335 (0.003) & 0.335
& 0.337 (0.007) & 0.337
& - &  -
& - &  -
& 0.304 (0.011) & 0.306
\\
3DVA
& 0.349 (0.004) & 0.350
& 0.333 (0.014) & 0.335
& 0.375 (0.038) &  0.375
& 0.088 (0.04) &  0.077
& 0.324 (0.010) & 0.323
\\
RECOVAR
& \secondbest{0.386 (0.005)} & \secondbest{0.388}
& \secondbest{0.363 (0.011)} & \secondbest{0.363}
& \secondbest{0.429 (0.018)} & \secondbest{0.432}
& \secondbest{0.258 (0.109)} & \secondbest{0.254}
& \best{0.362 (0.011)} & \best{0.365}
\\
3D Class
& 0.297 (0.019) & 0.291
& 0.309 (0.01) & 0.307
& 0.289 (0.081) &  0.288
& 0.046 (0.026) &  0.037
& 0.307 (0.023) & 0.308
\\
SOLVAR (ours)
& \best{0.388 (0.003)} & \best{0.389}
& \best{0.369 (0.009)} & \best{0.368}
& \best{0.431 (0.017)} &  \best{0.432}
&  0.246 (0.104) &  0.238
&  \secondbest{0.354 (0.012)} & \secondbest{0.355}
\\
\bottomrule
\end{tabular}}
\caption{AUC of per-image FSC. Bold and underlined text denotes best and second-best results.}
\label{tab:cryobenchFSCAUC}
\end{table*}

In Figure~\ref{fig:CrybenchUMAP}, we visualize the latent embedding of the different CryoBench datasets using UMAP~\cite{mcinnes2020umapuniformmanifoldapproximation}, which are consistent with the embeddings produced by other methods evaluated by Cryobench~\cite{Cryobenchjeon2025cryobenchdiversechallengingdatasets}, and for some of the datasets, reflect more clearly the expected structure of the latent space. 
A more nuanced comparison across different algorithms is presented in the next section.

\begin{figure}
	\centering
	\includegraphics[width=1\linewidth]{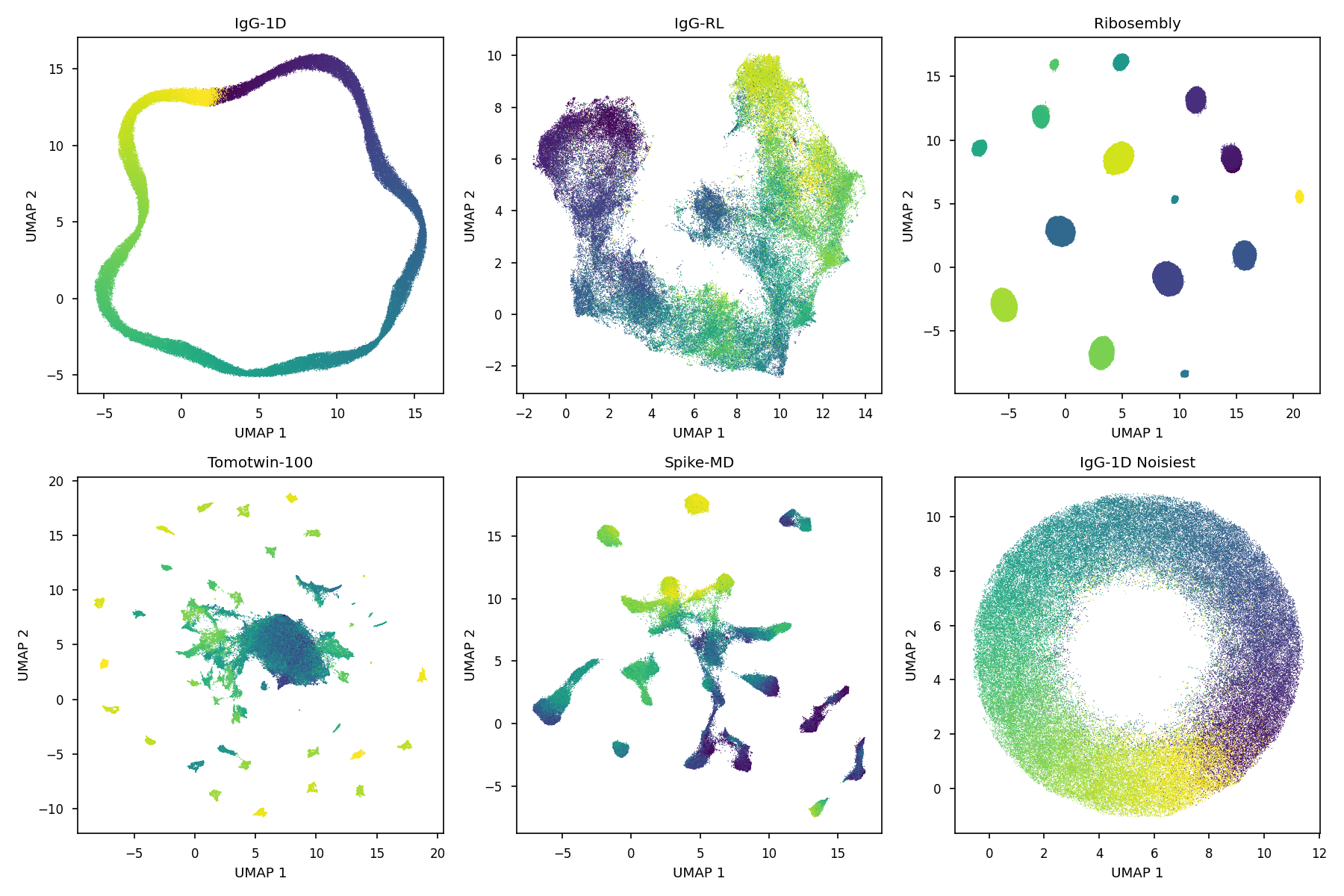}
	\caption{UMAP embedding of \algname~using 10 estimated principal components, for each dataset in Cryobench, colored by the published ground truth labels. See \cite{Cryobenchjeon2025cryobenchdiversechallengingdatasets} for a detailed description of each dataset.}
	\label{fig:CrybenchUMAP}
\end{figure}

\subsection{Particles' pose refinement and contrast correction}\label{subsec:PoseRefinement}

In order to test \algname~in a more realistic pipeline, we evaluate it in the ab-initio setting, where we use RELION's~\cite{SCHERES2012519_RELION_FSC} homogeneous refinement (\texttt{relion\_refine}) to obtain initial pose estimates for each of Cryobench's datasets, and then apply \algname~with pose optimization enabled.

In Table~\ref{tab:cryobenchPoseImprov}, we compare the error in the poses estimated by RELION with the error after the poses are refined by \algname; the error is defined as the mean over all particle images of the difference between the pose and the ground-truth pose provided in Cryobench (after a global alignment of the volume to the reference volume). As shown in Table~\ref{tab:cryobenchPoseImprov}, \algname~significantly improves RELION's estimated poses across most datasets. In addition, we compute the AUC of the per-image FSC of \algname's output (in the ab-initio setting described above). The results in Table~\ref{tab:cryobenchFSCAUCabinit} demonstrate that the RELION$\rightarrow$\algname~pipeline outperforms all other ab-initio methods evaluated by Cryobench.

To illustrate the impact of pose errors on the resulting volumes, we compare the reconstructed volumes from the IgG-1D dataset for different methods in Figure~\ref{fig:igg_comparison}. The comparison illustrates that \algname~is able to correct the poses and reconstruct a better-aligned and cleaner volume compared to other methods.

Running CryoDRGN \cite{CryoDRGNZhong2021-nj} and RECOVAR \cite{recovar} on the IgG-1D dataset (with 100,000 particle images and image size of 128) and a latent space dimension of $r=10$, takes 60 and 20 minutes respectively, while SOLVAR takes 25 minutes without optimization over poses and 90 minutes with optimization over poses. The comparison was made on a node equipped with an NVIDIA A100 GPU.

\begin{table}
	\centering
	\small
	\scalebox{0.65}{
		\begin{filecontents*}{figures/pose_opt/cryobench/pose_error_improvement.csv}
Dataset,Out of plane error (deg),In plane error (deg),Offset error (pixels)
IgG-1D,2.944 → 1.021 (65.3%),2.036 → 1.026 (49.6%),0.534 → 0.091 (82.9%)
IgG-RL,1.500 → 1.587 (-5.8%),1.356 → 1.452 (-7.1%),0.129 → 0.108 (16.8%)
Ribosembly,1.336 → 1.029 (23.0%),1.146 → 0.884 (22.8%),0.187 → 0.065 (65.0%)
Tomotwin-100,87.532 → 87.535 (-0.0%),87.697 → 87.718 (-0.0%),2.697 → 2.293 (15.0%)
Spike-MD,56.413 → 56.382 (0.1%),35.423 → 35.549 (-0.4%),7.451 → 7.846 (-5.3%)
IgG-1D Noisiest,18.399 → 18.015 (2.1%),17.576 → 17.272 (1.7%),1.111 → 0.877 (21.1%)
\end{filecontents*}
\csvautotabular[respect percent=true]{figures/pose_opt/cryobench/pose_error_improvement.csv}
	}
	\caption{Pose errors for Cryobench's datasets in the ab-initio setting. Left value is the pose error obtained by running homogeneous refinement, right value is the error after \algname~pose refinement, and the value in parentheses is the percentage of improvement made by SOLVAR. The errors for Tomotwin-100 and Spike-MD are particularly large due to the large number of discrete states (100) in Tomotwin-100 and an 'approximate symmetry' in Spike-MD. These large errors are consistent with other ab initio methods evaluated by Cryobench.
    }
	\label{tab:cryobenchPoseImprov}
\end{table}

\begin{table*}
\centering
\small
\scalebox{0.6}{
\begin{tabular}{lcccccccccc}
\toprule
\multirow{2}{*}{Method}
& \multicolumn{2}{c}{IgG-1D}
& \multicolumn{2}{c}{IgG-RL}
& \multicolumn{2}{c}{Ribosembly}
& \multicolumn{2}{c}{Tomotwin-100}
& \multicolumn{2}{c}{Spike-MD} \\
\cmidrule(lr){2-3}\cmidrule(lr){4-5}\cmidrule(lr){6-7}\cmidrule(lr){8-9}\cmidrule(lr){10-11}
& Mean (std) & Median
& Mean (std) & Median
& Mean (std) & Median
& Mean (std) & Median
& Mean (std) & Median \\
\midrule
CryoDRGN2
& 0.32 (0.062) & 0.342
& 0.301 (0.03) & 0.306
& \secondbest{0.341 (0.059)} & 0.356
& \secondbest{0.076 (0.016)} & \secondbest{0.072}
& 0.245 (0.042) & 0.260
\\
CryoDRGN-AI
& \secondbest{0.351 (0.01)} & \secondbest{0.352}
& \secondbest{0.329 (0.028)} & \secondbest{0.333}
& \secondbest{0.341 (0.083)} & \secondbest{0.367}
& 0.072 (0.015) & \secondbest{0.072}
& \secondbest{0.279 (0.017)} & \secondbest{0.281}
\\
3D Class abinit
& 0.13 (0.046) & 0.119
& 0.184 (0.022) & 0.188
& 0.144 (0.036) & 0.138
& 0.032 (0.012) & 0.031
& 0.206 (0.009) & 0.208
\\
SOLVAR-Fixed pose (ours)
& 0.348 (0.011) & 0.347
& 0.333 (0.011) & 0.330
& 0.419 (0.025) & 0.420
& 0.047 (0.010) & 0.046
& 0.311 (0.020) & 0.313
\\
SOLVAR-Pose opt (ours)
& \best{0.385 (0.006)} & \best{0.387}
& \best{0.334 (0.010)} & \best{0.332}
& \best{0.424 (0.020)} & \best{0.424}
& \best{0.103 (0.051)} & \best{0.084}
& \best{0.324 (0.028)} & \best{0.327}
\\
\bottomrule
\end{tabular}}
\caption{AUC of per-image FSC of Cryobenchs' datasets for ab-initio methods. Bold and underlined text marks methods that achieve the best and second-best metric value. For \algname~we have used RELION's homogeneous refinement as the initial poses.}
\label{tab:cryobenchFSCAUCabinit}
\end{table*}

\begin{figure}[h!]
    \centering

    \subfloat[\label{subfig:vol_comparison}]{
      \begin{minipage}[c]{0.5\textwidth}
        \centering
        \includegraphics[width=\linewidth]{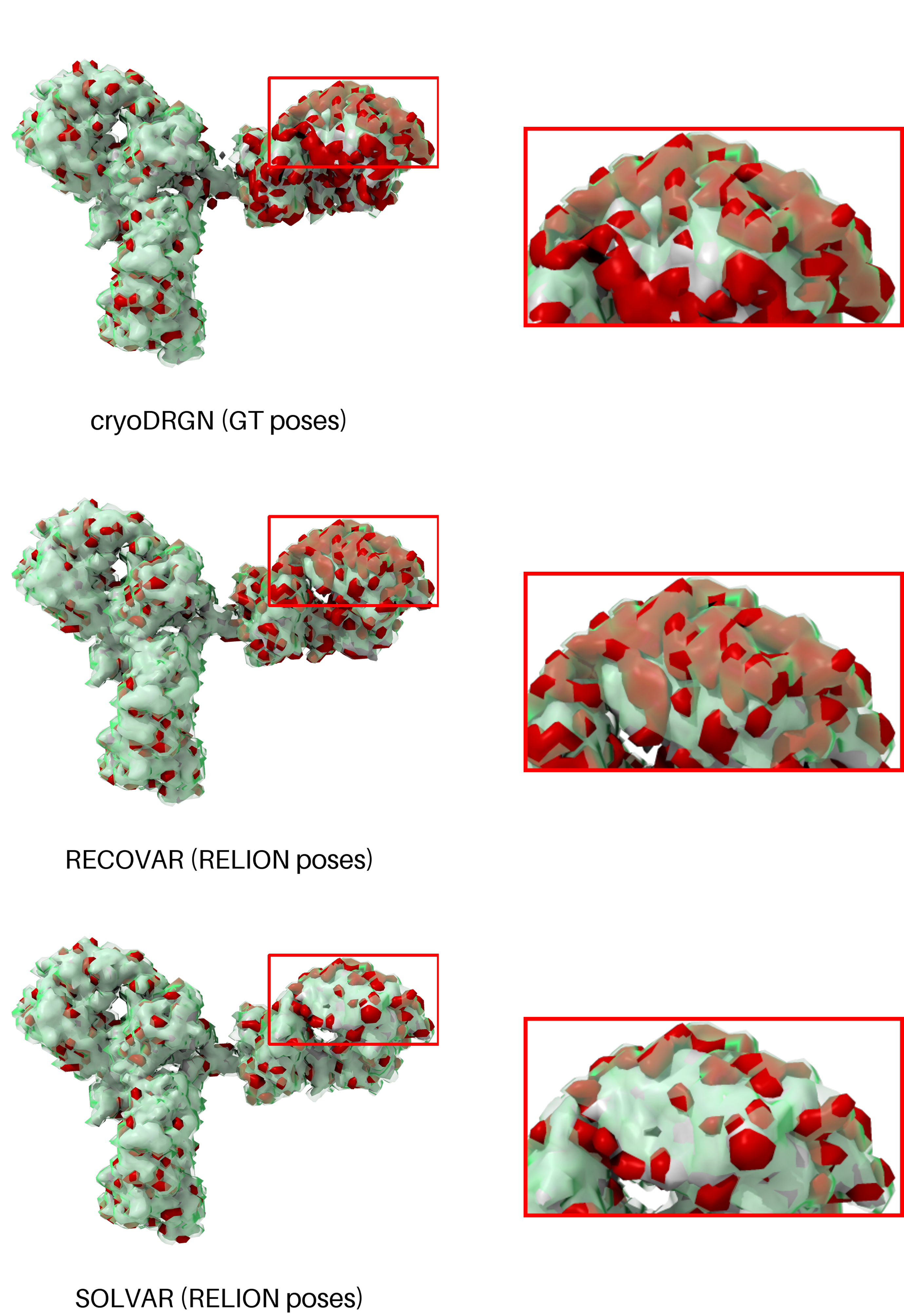}
      \end{minipage}
    }
    \hfill
    \subfloat[\label{subfig:fsc_comparison}]{
      \begin{minipage}[c]{0.45\textwidth}
        \centering
        \includegraphics[width=\linewidth]{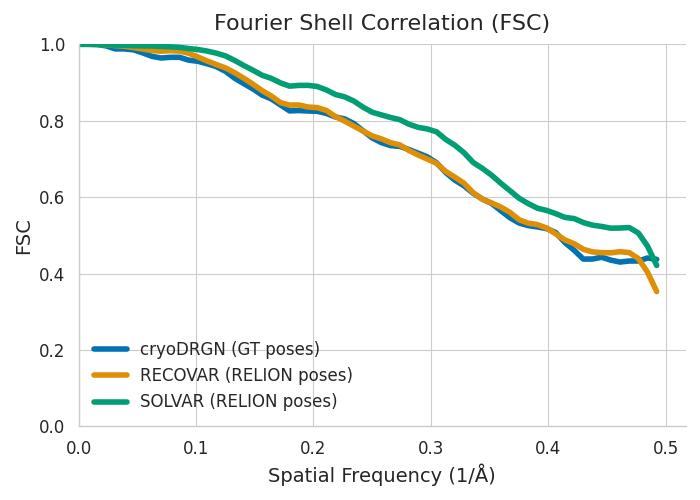}
      \end{minipage}
    }

    \caption{Comparison of reconstructed volumes using different methods for the synthetic IgG-1D dataset. 
    \protect\subref{subfig:vol_comparison}~Reconstructed volume from CryoDRGN (using ground truth poses), RECOVAR (using the poses obtained by RELION's refinement), and SOLVAR (using the poses obtained by RELION's refinement and optimizing over the particles' poses), overlayed by the ground truth conformation (green). The red parts show the binary XOR between the reconstructed volume and the overlayed ground truth volume.
    \protect\subref{subfig:fsc_comparison}~FSC curve of the three reconstructed volumes with the ground truth conformation.
    CryoDRGN produces a noisier volume despite using ground-truth poses, while RECOVAR's is not fully aligned due to errors in the input poses. SOLVAR is able to correct the poses and output an aligned and clean volume. 
    RECOVAR and SOLVAR were sharpened with \texttt{relion\_postprocess}. Since CryoDRGN does not output half-maps, it was sharpened with a B-factor of -250 \AA$^2$ (the average B-factor RELION uses for RECOVAR and SOLVAR).
    }
    \label{fig:igg_comparison}
\end{figure}

We test \algname~under a more challenging setting by introducing contrast variability to Cryobench's IgG-1D dataset. We apply a random scaling to each particle image with a factor of $\alpha \sim N(1,0.2^2)$ and use the same ab-initio approach (run homogeneous refinement and use its initial pose estimates). Figure~\ref{fig:igg_pose_refinement} shows the different pose and contrast errors during the optimization process. SOLVAR's final estimate improves the out-of-plane, in-plane, and offset errors by $56\%, 29\%$, and $69\%$, respectively, and corrects $70\%$ of the variability in the contrast.

\begin{figure}
	\centering
	\includegraphics[width=1\linewidth]{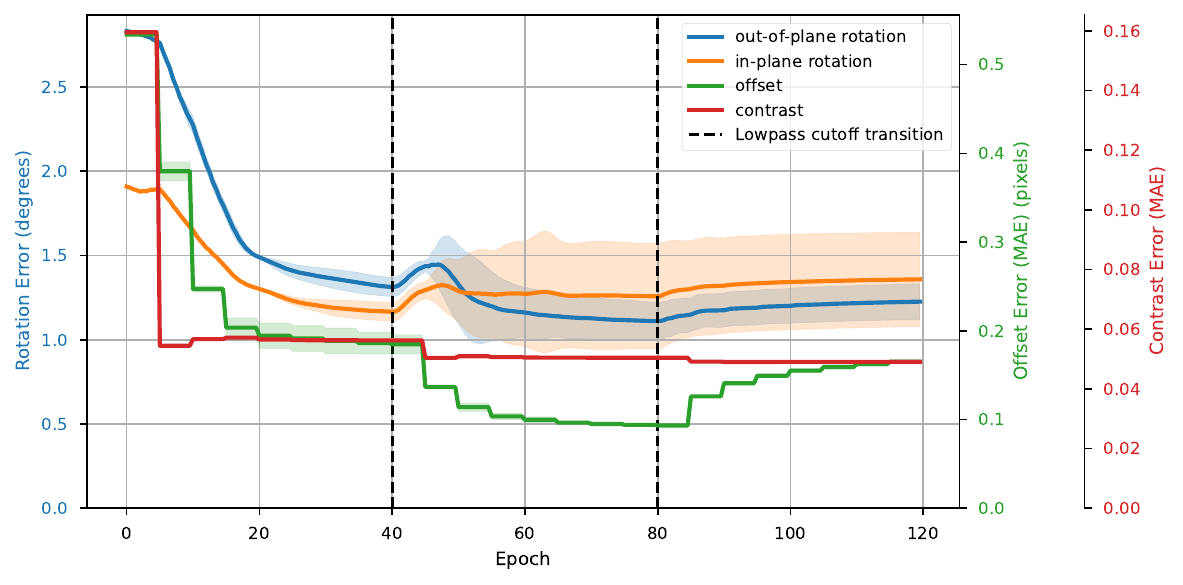}
	\caption{Particles pose error obtained by \algname~on the synthetic IgG-1D dataset. Initial pose error is obtained from homogeneous refinement process. The lowpass cutoff frequency applied to the estimated principal components increases by a factor of 2 every 40 epochs. \algname~improves the out-of-plane, in-plane, offset, and contrast errors by $56\%$, $29\%$, $69\%$, and $70\%$, respectively.}
	\label{fig:igg_pose_refinement}
\end{figure}

\subsection{Experimental datasets}

We evaluate \algname~on the EMPIAR-10076 dataset~\cite{Davis2016-qq}, which comprises 130,000 particle images of Escherichia coli large ribosomal-subunit (50S) assembly intermediates lacking the ribosomal protein L17. This dataset contains 14~distinct states identified through repeated use of 3D classification algorithms. We run \algname~on the downsampled $128\times 128$ particle images and estimate the first $r=15$ principal components. We then reconstruct the volumes from the estimated latent space coordinates using the original particle image size of $256\times 256$ pixels. 

We show the resulting UMAP embedding and reconstructed states in Figure~\ref{fig:EMPIAR10076}. We observe that the UMAP visualization of the latent space produced by \algname~is very similar to that of other methods (such as \cite{recovar, CryoDRGNZhong2021-nj}).

\begin{figure}
	\centering
	\subfloat[\label{subfig:empiar_vols}]{\includegraphics[width=0.75\textwidth]{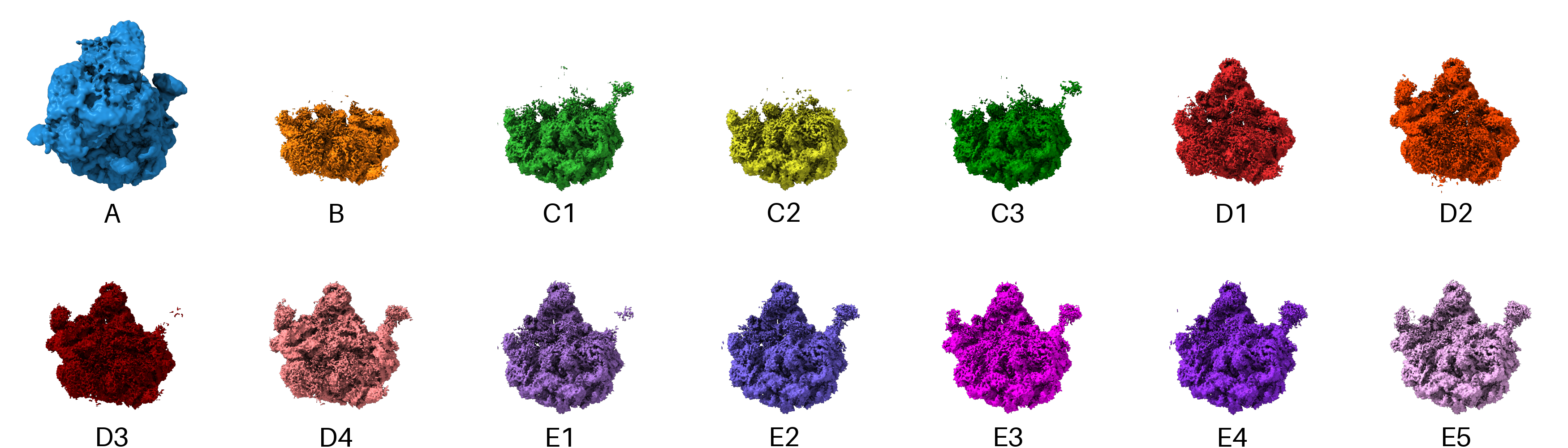}}
	\subfloat[\label{subfig:empiar_umap}]{\includegraphics[width=0.2\textwidth]{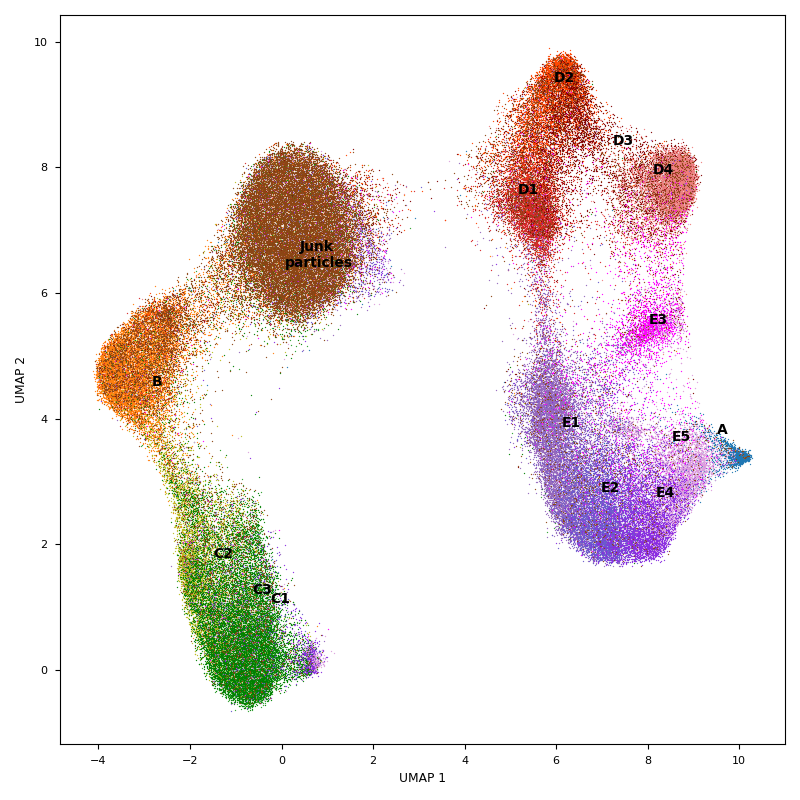}}
	\caption{\algname~results for EMPIAR-10076 using 15 estimated principal components. 
    \protect\subref{subfig:empiar_vols}~Reconstructed volumes of each identified state, colored by published labels. Volumes are sharpened with \texttt{relion\_postprocess} (except for state A) with an average B-factor of $-101$\AA$^2$.
	\protect\subref{subfig:empiar_umap}~UMAP embedding colored by the published labels.
	Each class is annotated by the mean of the latent coordinates of this class $\oneover{|S_i|}\sum_{j\in S_i} \hat{z}_j$.
	The resulting volumes and UMAP embedding are consistent with those of other methods \cite{CryoDRGNZhong2021-nj,recovar}.}
	\label{fig:EMPIAR10076}
\end{figure}

Next, we evaluate our proposed method on the EMPIAR-10180 dataset~\cite{Plaschka2017} of the pre-catalytic spliceosome (B-complex) in multiple conformational states of the SF3b/helicase region in an ab-initio setting. We ran homogeneous refinement in cryoSPARC \cite{Punjani2017-ed} to obtain initial pose estimates, then ran \algname~with pose optimization and estimated $r=10$ principal components on the entire dataset, comprising over 320,000 particle images, downsampled to $128 \times 128$ pixels. 
The results in Figure~\ref{fig:EMPIAR10180} demonstrate \algname's ability to identify continuous motion of the precatalytic spliceosome, and to identify a distinct structure with over 60,000 particle images in a visibly separate cluster in the UMAP embedding.

\begin{figure}
	\centering
	\subfloat[\label{subfig:empiar180_umap}]{\includegraphics[width=0.52\textwidth]{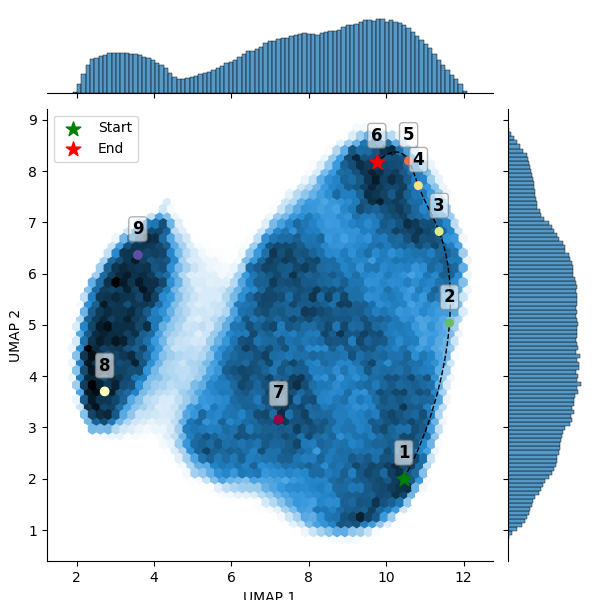}}
	\subfloat[\label{subfig:empiar180_vols}]{\includegraphics[width=0.4\textwidth]{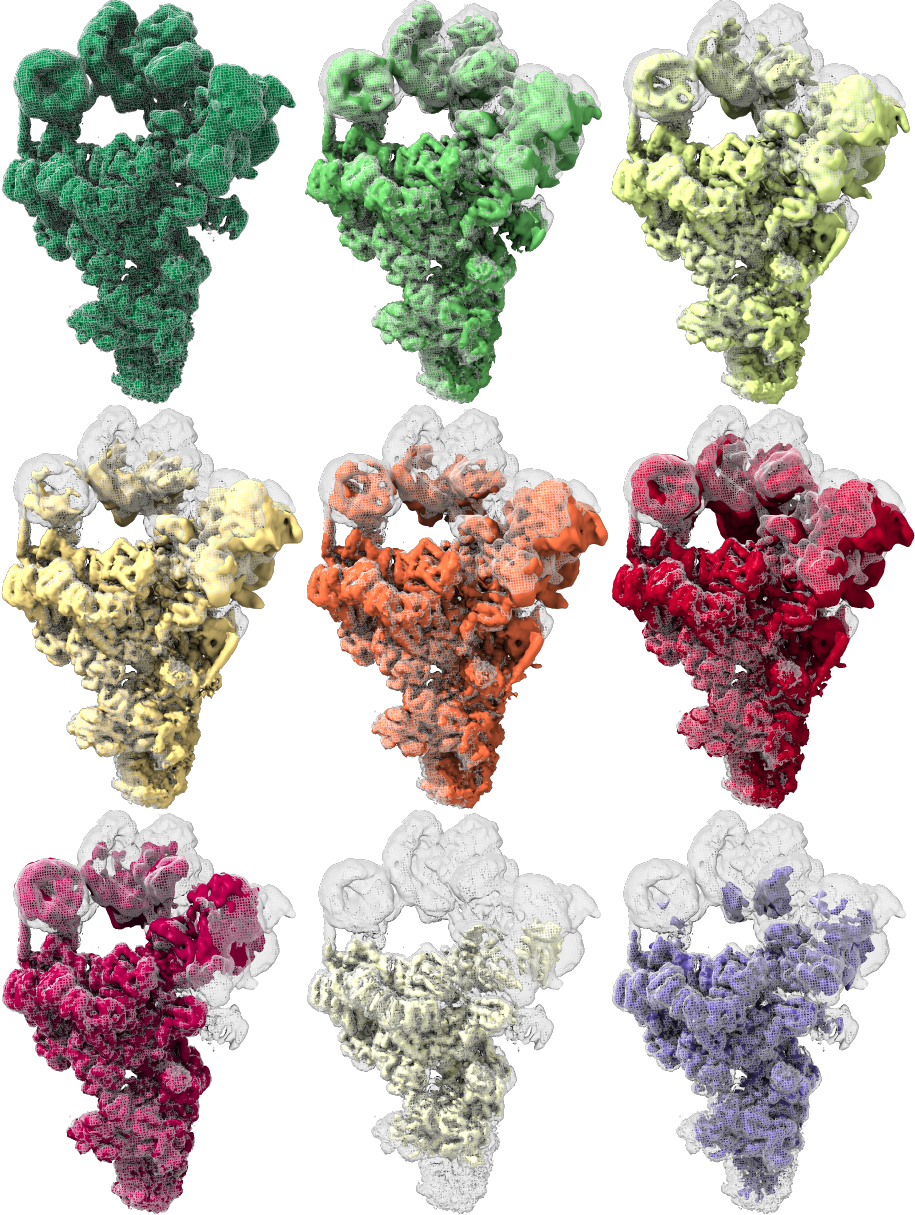}}
	\caption{\algname~results for EMPIAR-10180 using 10 estimated principal components in an ab-initio setting. 
		\protect\subref{subfig:empiar180_umap}~UMAP embedding of the resulting latent coordinates. 
		\protect\subref{subfig:empiar180_vols}~Reconstructed volumes at the trajectory points shown in~\protect\subref{subfig:empiar180_umap}. 
		The contour of the first volume (dark green) is placed on top of each volume as a reference.
        The volumes were sharpened with \texttt{relion\_postprocess} by masking the core and foot of the spliceosome (with an average B-factor of $-102$\AA$^2$) and keeping the helicase and SF3B parts unsharpened.
	}
	\label{fig:EMPIAR10180}
\end{figure}

\section{Conclusions}

A fast algorithm for analyzing continuous heterogeneity in cryo-EM data has been introduced. The method is based on direct estimation of the principal components from cryo-EM particle image data, without first estimating volumes or a costly direct estimate of the covariance matrix of volumes. Compared with previous covariance-based works, the new algorithm uses a low-rank representation of the covariance matrix that is amenable to SGD optimization and an alternative maximum-likelihood estimator. Unlike prior work, the algorithm can update pose and contrast estimates, making it applicable in realistic scenarios where  heterogeneity in the data makes it difficult to obtain optimal pose estimates separately from the analysis of the heterogeneity.

Our numerical experiments demonstrate that the algorithm achieves state-of-the-art performance on benchmarks with known poses. Furthermore, they indicate that the algorithm can be combined with traditional ab-initio software such as RELION or CryoSPARC in realistic pipelines where initial pose estimates are inaccurate and should be refined together with the analysis of heterogeneity.
The experiments demonstrate the algorithm's ability to improve pose and contrast estimates, yielding more accurate volume estimates. The algorithm is computationally efficient and scales well with the particle size $N$.

 Several directions for future work remain. First, our algorithm requires the number of components to estimate as an input from the user, which is typically not known a priori. However, the complexity of our algorithm scales as $O(r^2)$ for optimizing~\eqref{eq:CovarLSLowRank} and as $O(r^3)$ for optimizing~\eqref{eq:ML objective}. In practice, the number of estimated principal components tends to affect the runtime only linearly. Thus, one direction for improvement is to estimate the actual rank of the covariance after estimating a sufficiently large number of principal components, and prune the number of components used in the latent embedding, thus reducing noise in the latent space.

Second, the algorithm can be extended to fully ab-initio setting, or at least to settings with worse initializations, by employing the EM algorithm to jointly refine the mean and principal components as well as the particle poses, in a fashion similar to common homogeneous refinement methods. 

Finally, there remains significant potential to improve \algname’s runtime, such as implemanting efficient frequency marching ~\cite{barnett2017rapidsolutioncryoemreconstruction} within \algname's framework.

\section*{Acknowledgments}

We thank Marc Gilles for helpful discussions. We are grateful to the developers of ASPIRE, CryoDRGN, RECOVAR, and RELION for making their software open source; portions of our implementation rely on functionalities provided by these packages. We also thank the Yale Center for Research Computing (YCRC) for providing computing resources and support. The work was supported by NIH/NIGMS (1R35GM157226), the Alfred P. Sloan Foundation (FG-2023-20853), and the Simons Foundation (1288155).

\begin{appendices}

\section{Methods}
\subsection{General details}
\algname~ is implemented in PyTorch \cite{PytorchDBLP:journals/corr/abs-1912-01703}, and uses its autodiff framework to compute the derivatives of optimization objectives \eqref{eq:CovarLSLowRank},\eqref{eq:ML objective} (i.e. derivative expressions are not explicitly implemented). We use the Adam optimizer \cite{kingma2017adammethodstochasticoptimization} with a learning rate of $10^{-6}$ and a batch size of $1024$. 

\subsection{Covariance FSC-based Regularization}\label{sec:CovarFSCReg}
We follow the approach of RECOVAR \cite{recovar}, which generalizes the FSC-based regularization scheme to the covariance case (see \cite[Supplementary section C]{recovar}) with some modifications.
In contrast to RECOVAR, which applies this approach on each column of the covariance matrix independently, we use it on each ``frequency shell pair'' or ``sub-matrix'' of the covariance in Fourier space. Let~$S_i$ denote the frequencies which correspond to the $i$-th shell, that is, $S_i = \{f \in \MN[3] | \norm{f} \in [i-0.5,i+0.5] \}$ where $\MN = \{ -\floor{N/2} , \ldots , \ceil{N/2-1} \}$. Then, the sub-matrix $\Sigma_{S_i,S_j}$ contains the rows and columns of~$\Sigma$ corresponding to the $i$-th and $j$-th shells, respectively. 
We define the covariance FSC between two covariance estimates $A,B$ to be the correlation of each pair of frequency shells
\newcommand{\shellPair}[0]{S_{i},S_{j}}
\begin{equation}\label{eq:CovarFSCProperty}
	CovarFSC(A, B)_{i,j} = \frac{\inprod[\shellPair]{A}{B}}{\norm[\shellPair]{A} \norm[\shellPair]{B}},
\end{equation}
where,
\begin{equation*}
	\inprod[\shellPair]{A}{B} = \sum_{i\in S_{i} , j\in S_{j}} A[i][j] \conj{B}[i][j].
\end{equation*}

We note that we can compute the inner products in~\eqref{eq:CovarFSCProperty} efficiently using Lemma~\ref{prop:FroLowRankInnerProd}, since any sub-matrix of a low-rank matrix is also low-rank, which results in 
\begin{equation*}
	\inprod[\shellPair]{A}{B} = \SumRank[m_1][r] \SumRank[m_2][r] \inprod[S_{i}]{a_{m_1}}{b_{m_2}} \conj{\inprod[S_{j}]{a_{m_1}}{b_{m_2}}}.
\end{equation*}
In fact, we can also compute these inner products for all pairs of shells at once ($\inprod[\shellPair]{A}{B} \; , \forall i,j \in \{1,\ldots,N\}$) by denoting $G \in \matrixSpace[\complex]{N^3}{r^2}$ and $C= GG^*$ where $G$ is defined by
\begin{equation*}
	G_{k,l} = \inprod[S_{k}]{a_{m_1}}{b_{m_2}} \; , l = r (m_1-1) + m_2.
\end{equation*}
Then, $C_{i,j} = (GG^*)_{i,j} = \inprod[\shellPair]{A}{B}$. The multiplication $C = GG^{*}$ requires $O(N^3r^2)$ operations, while computing $G$ itself requires computing the inner products between all pairs of eigenvectors and over all $N$ shells, resulting in a complexity of $O(\sum_i r^2 |S_{i}|)= O(r^2 N^3)$ operations (that is since the different shells cover the volume), which is the same complexity we had for computing the regular Frobenius inner product in Lemma~\ref{prop:FroLowRankInnerProd}. 

The matrix $R_\Sigma$ in~\eqref{eq:CovarLSDefinition} is then computed by
\begin{equation*}
	(R_\Sigma)_{k,l} = \frac{1 - CovarFSC(A, B)_{i,j}}{CovarFSC(A, B)_{i,j}} T_{i,j} \; \forall k \in S_i, l \in S_j,
\end{equation*}
where $T_{i,j} = \oneover{\abs{S_i}\abs{S_j}}\SumOnetoN[m] \sum_{k' \in S_i , l' \in S_j} \abs{(p_m)_{k'} (p_m)_{l'}}^2$ and $(p_m)_{k'}$ is the $k'$-th diagonal entry of the projection operator of the $m$-th particle (in the nearest-neighbor interpolation of the projection operator in Fourier domain). We then approximate the matrix $R_\Sigma$ with its first $m=5$ eigenvectors $R_\Sigma = \sum_i^m r_i r_i^T$.

\subsection{Optimizing particles' pose}\label{subsec:PoseOptDetails}

We use different techniques to refine the rotation, offset, and predict the contrast of each particle image. For the rotation, we use gradient descent over the particle rotation vector $\theta_i \in \real^3$ where its magnitude represents the angle of rotation in radians $\norm{\theta_i} \leq \pi$ and $\frac{\theta_i}{\norm{\theta_i}}$ is its axis of rotation. We note that the additional computation of the gradient with respect to the rotation $\der[f]{\theta_i}$ does not increase the overall computational complexity of the algorithm.

We note that the nearest-neighbor interpolation is not differentiable with respect to the rotation vector; therefore, this form of pose refinement cannot be applied when this interpolation scheme is used.

For the optimizing over the offsets of the particle images, we use a combination of Newton's method and a line search algorithm, i.e. $\delta^{(n)} = \delta^{(n-1)} - \alpha_n H^{-1} \der[f]{\delta}|_{\delta=\delta^{(n-1)}}$ where $\delta \in [-N,N]^2$ is the particle image offset, and $\alpha \in [0,1]$ is the taken to be the largest $\alpha$ such that $f(\delta^{(n)}) < f(\delta ^ {(n-1)})$. We have found that replacing standard gradient descent with Newton's method and line search is crucial for being able to improve the initial pose estimation consistently (See Figure~\ref{fig:igg_pose_refinement_comparison}).
We update the particle image offsets every $5$ epochs using $10$ Newton iterations.

For the estimation of the contrast per particle image, we follow the approach proposed by 3DVA~\cite{3dva}, and estimate the contrast of the $i$-th particle image by 
\begin{equation*}
	\hat{\alpha}_i = \frac{\bigparants{P_i (\approxMu + \hat{V}\hat{z}_i)}^* Y_i}{\norm{P_i (\approxMu + \hat{V}\hat{z}_i)}^2},
\end{equation*}
where $\hat{z}_i$ are the latent coordinates given by
\begin{equation*}
	\hat{z}_i = ((P_i\hat{V})^* (P_i \hat{V}) + \sigma^2I)^{-1} (\oneover{\sigma^2} P_i \hat{V})^* Y_i.
\end{equation*}
We then incorporate the estimated contrast by updating the projection operator $\bar{P_i} = \hat{\alpha_i} P_i$ during the optimization process.

In order to improve the convergence of the optimization process and mitigate its susceptibility to local-minima, we apply a lowpass filter on the estimated principal components after each stochastic iteration $V^{(n+1)} = L_f (V^{(n)} - \alpha \der[f]{V})$ where $L_f$ applies the lowpass filter with cutoff frequency $f$ on each of the columns of $V$. We start by $f = \pi/4$ and multiply it by a factor of $2$ every $K=40$ epochs, until reaching $f=\pi$.

Figure~\ref{fig:igg_pose_umap} shows the effect of pose and contrast correction on the resulting UMAP embedding, and that the maximum likelihood estimator performs considerably better compared to the least squares estimator when optimizing over the particle poses. We defer to future work the theoretical analysis of the difference between the two objectives in the context of pose estimation.

\begin{figure}
	\centering
	\begin{minipage}{0.4\linewidth}
		\centering
		\textbf{LS - no pose refinement} \\[0.5ex]
		\includegraphics[width=\linewidth]{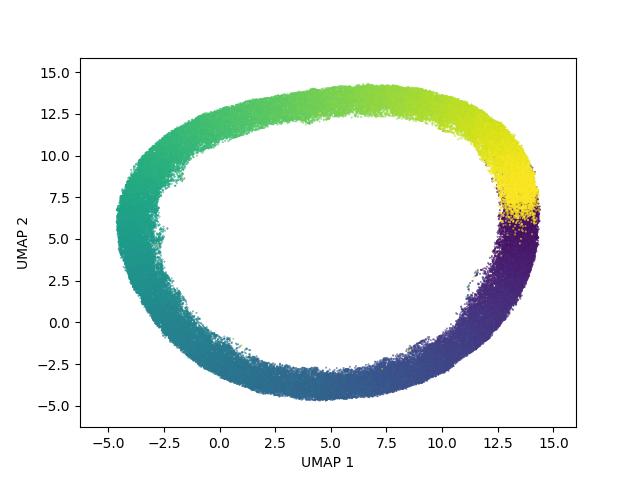}
	\end{minipage}
	\begin{minipage}{0.4\linewidth}
		\centering
		\textbf{ML - no pose refinement} \\[0.5ex]
		\includegraphics[width=\linewidth]{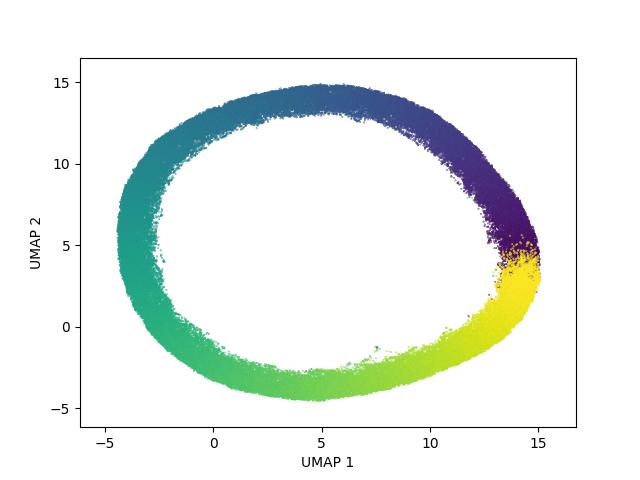}
	\end{minipage}
	\vspace{1em}
	\begin{minipage}{0.4\linewidth}
		\centering
		\textbf{LS with pose refinement} \\[0.5ex]
		\includegraphics[width=\linewidth]{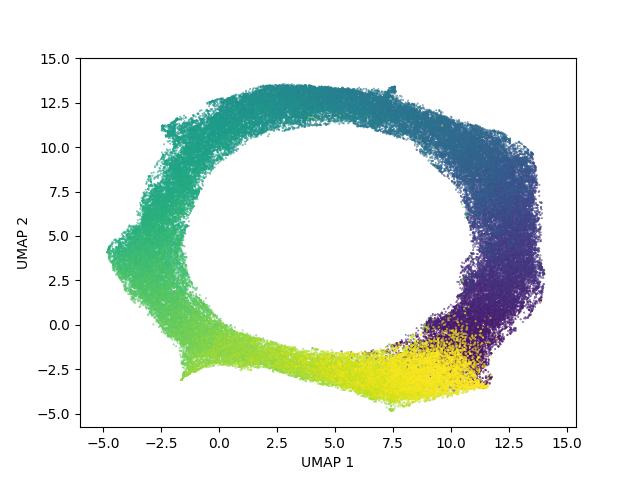}
	\end{minipage}
	\begin{minipage}{0.4\linewidth}
		\centering
		\textbf{ML with pose refinement} \\[0.5ex]
		\includegraphics[width=\linewidth]{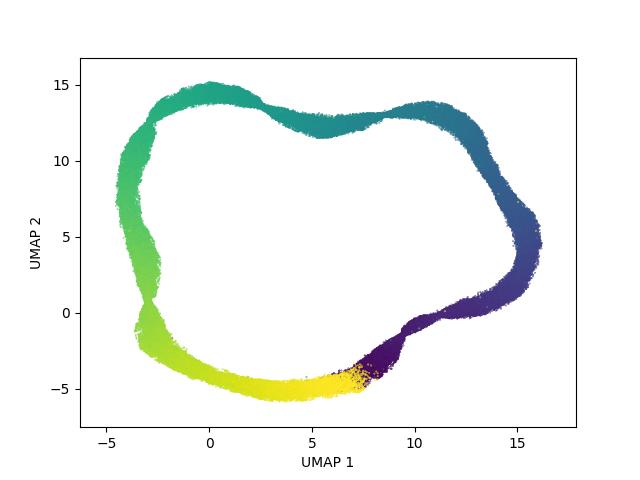}
	\end{minipage}
	
	\caption{Comparison of UMAP embeddings on the synthetic IgG-1D Cryobench dataset, with and without (bottom and top) optimization over poses and contrast, using least squares (left) and maximum likelihood (right) estimators. The least-squares estimator does not perform well when optimizing over the particle poses. The maximum likelihood estimator, combined with pose optimization, is able to reduce the width of the manifold and uncovers the single degree of freedom in this dataset.}
	\label{fig:igg_pose_umap}
\end{figure}

\begin{figure}
	\centering
	\includegraphics[width=1\linewidth]{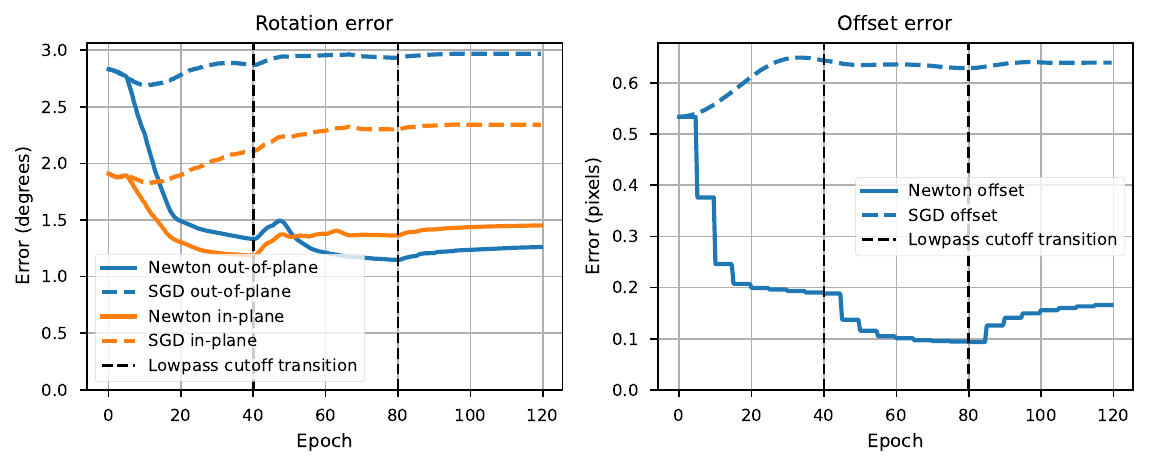}
	\caption{SGD vs Newton's method for offset refinement for the IgG-1D dataset (poses initialized by RELION homogeneous refinement). Standard SGD on the particle offset degrades the error of the initial homogeneous refinement, which leads to a degradation in the particle's rotation as well.}
	\label{fig:igg_pose_refinement_comparison}
\end{figure}

\section{Derivation of objective function}
\subsection{Least squares estimator}\label{subsec:LSDerivation}

We start with the following useful identity.

\begin{lemma}\label{prop:FroLowRankInnerProd}
	Let $A,B \in \matrixSpace[\complex]{n}{n}$ be two low-rank positive semi-definite (PSD) matrices, where $A = \SumRank[i][r_1] a_i a_i^*$ and $B = \SumRank[i][r_2] b_i b_i^* $. Then, their Frobenius inner product $\inprod[F]{A}{B}$ (that is, their dot product when considered as vectors) satisfies $\inprod[F]{A}{B} = \SumRank[i][r_1]\SumRank[j][r_2] \inprod{a_i}{b_j}^2$.
\end{lemma}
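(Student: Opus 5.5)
The plan is to use sesquilinearity of the Frobenius inner product to reduce the claim to rank-one matrices, and then to evaluate the rank-one case with the trace formula. I will take the convention $\inprod[F]{A}{B} = \sum_{k,l} A_{kl}\conj{B_{kl}} = \operatorname{tr}(B^* A)$, and $\inprod{x}{y} = y^* x = \sum_k x_k \conj{y_k}$, which is consistent with the definition of $\inprod[\shellPair]{A}{B}$ in Appendix~\ref{sec:CovarFSCReg}. Since the vectors are complex, the square $\inprod{a_i}{b_j}^2$ in the statement must be read as the squared modulus $\abs{\inprod{a_i}{b_j}}^2$. In the real case the two readings coincide. I will state this interpretation explicitly at the start of the proof.

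\emph{Step 1 (reduction to rank one).} Substitute $A = \sum_{i=1}^{r_1} a_i a_i^*$ and $B = \sum_{j=1}^{r_2} b_j b_j^*$. The Frobenius inner product is linear in its first argument and conjugate-linear in its second. All coefficients in these expansions equal $1$, which is real, so no conjugation issues arise. Hence
\[
\inprod[F]{A}{B} = \sum_{i=1}^{r_1}\sum_{j=1}^{r_2} \inprod[F]{a_i a_i^*}{b_j b_j^*}.
\]

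\emph{Step 2 (rank-one evaluation).} For vectors $a,b\in\complex^n$, the trace formula and the cyclic property of the trace give
\[
\inprod[F]{aa^*}{bb^*} = \operatorname{tr}\bigl(b b^* a a^*\bigr) = (a^* b)(b^* a) = \conj{\inprod{a}{b}}\,\inprod{a}{b} = \abs{\inprod{a}{b}}^2 .
\]
One can check this entrywise instead: $\sum_{k,l} a_k \conj{a_l}\,\conj{b_k} b_l = \bigl(\sum_k a_k\conj{b_k}\bigr)\bigl(\sum_l \conj{a_l} b_l\bigr)$. Combining Step 1 with Step 2 gives the identity. As a by-product, $\inprod[F]{A}{B}$ is real and nonnegative.

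There is no real obstacle in this argument. The only delicate point is the conjugation convention. One has to verify that the chosen definitions of $\inprod[F]{\cdot}{\cdot}$ and $\inprod{\cdot}{\cdot}$ make the rank-one product come out as $\abs{\inprod{a}{b}}^2$ rather than $\inprod{a}{b}^2$. That is why I fix the convention first. The PSD and low-rank hypotheses are used only through the factorizations of $A$ and $B$. I will also remark that evaluating the right-hand side costs $O(r_1 r_2 n)$ operations, which is what makes the lemma useful for \eqref{eq:CovarLSLowRank}.
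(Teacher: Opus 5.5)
Your proof is correct and follows essentially the paper's argument. The paper also expands both sums inside $\operatorname{tr}(A^*B)$, pulls out the scalar $a_i^*b_j$, and evaluates the trace of the rank-one matrix $a_ib_j^*$, arriving at $\abs{\inprod{a_i}{b_j}}^2$, which matches your reading of the statement. The only differences are that you apply sesquilinearity before taking the trace and use the opposite conjugation convention; this is harmless because $A$ and $B$ are Hermitian and the result is real.
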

\begin{proof}
	\begin{equation*}
		\begin{aligned}
			\inprod[F]{A}{B} &= \operatorname{tr}(A^*B) = \operatorname{tr} \bigparants{(\SumRank[i][r_1] a_i a_i^*)(\SumRank[i][r_2] b_i b_i^*)} \\ &= \operatorname{tr} \bigparants{\SumRank[i][r_1] \SumRank[j][r_2] a_i a_i^* b_j b_j^*} \\
			&= \operatorname{tr} \bigparants{\SumRank[i][r_1] \SumRank[j][r_2] \inprod{a_i}{b_j} a_i b_j^*} \\
			&= \SumRank[i][r_1] \SumRank[j][r_2] \inprod{a_i}{b_j} \operatorname{tr} \bigparants{ a_i b_j^*} 
			= \SumRank[i][r_1] \SumRank[j][r_2] \abs{\inprod{a_i}{b_j}}^2,
		\end{aligned}
	\end{equation*}
	where the last equality follows since the diagonal of a rank-1 matrix is simply the element-wise product of the two vectors, that is $(a_i b_j^*)_{kk} = (a_i)_k \conj{(b_j)}_k$.
\end{proof}

Lemma~\ref{prop:FroLowRankInnerProd} allows us to compute the Frobenius inner product of two low-rank PSD matrices using just inner products of their eigenvectors. Thus, given that we have the eigen-decomposition of the matrices, we can compute their inner product in $O(r_1 r_2 n)$ operations. Since the optimization objective in~\eqref{eq:CovarLSDefinition} contains a Frobenius norm of a low-rank matrix (plus a scaled identity matrix), we can easily rewrite it using the eigenvectors of the covariance matrix, as follows.
\begin{lemma}\label{lemma:EigenFrobeniusIdentity}
	It holds that
	\begin{multline}\label{eq:EigenFrobeniusIdentity}
		\norm[F]{(\imageZeroEstMean)(\imageZeroEstMean)^* - \projectedCovar - \noiseCovar}^2 = \norm{\imageZeroEstMean}^4 \\
		-2 \bigparants{\SumRank[j] \inprod{\imageZeroEstMean}{\projectedEigen[j]}^2 + \sigma^2 \norm{\imageZeroEstMean}^2} \\
		+ \SumRank[j,k] \inprod{\projectedEigen[j]}{\projectedEigen[k]}^2 + 2 \sigma^2  \SumRank[j] \norm{\projectedEigen[j]}^2 + \sigma^4 N^2.  
	\end{multline}
\end{lemma}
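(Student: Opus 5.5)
Write $y = \imageZeroEstMean$ and $u_j = \projectedEigen[j]$, so that $\projectedCovar = \SumRank[j] u_j u_j^*$. The matrix inside the norm is then
\begin{equation*}
D = y y^* - \SumRank[j] u_j u_j^* - \sigma^2 I .
\end{equation*}
The plan is to expand $\norm[F]{D}^2 = \inprod[F]{D}{D}$ by sesquilinearity into the six pairwise Frobenius inner products of the three terms $yy^*$, $\SumRank[j] u_j u_j^*$ and $\sigma^2 I$. Each of these is then evaluated in closed form. All three matrices are Hermitian, so every cross term is real and appears with coefficient $2$.

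The terms are handled as follows.
\begin{itemize}
\item The two squared terms involving only rank-one pieces come straight from Lemma~\ref{prop:FroLowRankInnerProd}. Taking $A = B = yy^*$ gives $\inprod[F]{yy^*}{yy^*} = \abs{\inprod{y}{y}}^2 = \norm{y}^4$.
\item Taking $A = B = \SumRank[j] u_j u_j^*$ gives $\SumRank[j,k] \inprod{u_j}{u_k}^2$, with the square read as the modulus squared, consistent with the last line of that lemma's proof.
\item The cross term between $yy^*$ and the projected covariance is again Lemma~\ref{prop:FroLowRankInnerProd}, with $r_1 = 1$ and $r_2 = r$. It contributes $-2 \SumRank[j] \inprod{y}{u_j}^2$.
\item The terms involving $\sigma^2 I$ are trace computations. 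First, $\inprod[F]{yy^*}{\sigma^2 I} = \sigma^2 \operatorname{tr}(yy^*) = \sigma^2 \norm{y}^2$.
\item Similarly, $\inprod[F]{\SumRank[j] u_j u_j^*}{\sigma^2 I} = \sigma^2 \SumRank[j] \norm{u_j}^2$.
\item Finally, $\norm[F]{\sigma^2 I}^2 = \sigma^4 \operatorname{tr}(I) = \sigma^4 N^2$, because the image lives in $\complex^{N \times N}$, viewed as a vector of length $N^2$.
\end{itemize}

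With the minus signs in $D$, the cross terms are $-2\inprod[F]{yy^*}{\SumRank[j] u_j u_j^*}$ and $-2\sigma^2\inprod[F]{yy^*}{I}$, but $+2\sigma^2\inprod[F]{\SumRank[j] u_j u_j^*}{I}$, since two negative signs multiply. Collecting everything gives exactly the right-hand side of \eqref{eq:EigenFrobeniusIdentity}: the $-2(\cdot)$ group contains the $y$–$u$ and $y$–$I$ cross terms, and $+2\sigma^2 \SumRank[j] \norm{u_j}^2$ is the $u$–$I$ cross term.

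There is no real obstacle here. The only points requiring care are the sign bookkeeping of the cross terms and confirming they are real, so that the two conjugate cross terms combine into twice one of them.
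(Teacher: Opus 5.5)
Your proof is correct and is essentially the paper's argument: expand the squared Frobenius norm, evaluate the pieces built from $yy^*$ and $\sum_j u_j u_j^*$ with Lemma~\ref{prop:FroLowRankInnerProd}, and evaluate the pieces involving $\sigma^2 I$ as traces; the only difference is that the paper first groups $P_i\Sigma P_i^*+\sigma^2 I$ and expands that square separately. Your sign for the $+2\sigma^2\sum_j\|P_i v_j\|^2$ term is right, and in fact the paper's intermediate display for $\|P_i\Sigma P_i^*+\sigma^2 I\|_F^2$ writes $-2\langle\cdot,\sigma^2 I\rangle_F$ where $+2$ is meant, although its final formula agrees with yours.
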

\begin{proof} 
	Expanding the left-hand-side of~\eqref{eq:EigenFrobeniusIdentity}, we get    \begin{multline}\label{eq:EigenFrobeniusDerivation1}
		\norm[F]{(\imageZeroEstMean)(\imageZeroEstMean)^* - (\projectedCovar + \noiseCovar)}^2 
		= \norm[F]{(\imageZeroEstMean)(\imageZeroEstMean)^*}^2 \\
		-2 \inprod[F]{(\imageZeroEstMean)(\imageZeroEstMean)^*}{\projectedCovar + \noiseCovar} + 
		\norm[F]{\projectedCovar + \noiseCovar}^2.   
	\end{multline}    
	The first term on the right-hand-side of~\eqref{eq:EigenFrobeniusDerivation1} is a rank-1 matrix and so
	\begin{equation}\label{eq:EigenFrobeniusDerivation2}
		\norm[F]{(\imageZeroEstMean)(\imageZeroEstMean)^*}^2 = \norm{\imageZeroEstMean}^4.
	\end{equation}
	The second term on the right-hand side of~\eqref{eq:EigenFrobeniusDerivation1} can be split into an inner product of a rank-1 matrix with a rank-$r$ matrix, plus an inner product between a rank-1 matrix and a scaled identity matrix, that is        
	\begin{equation}\label{eq:EigenFrobeniusDerivation3}
		\begin{aligned}
			\inprod[F]{(\imageZeroEstMean)(\imageZeroEstMean)^*}{\projectedCovar + \noiseCovar} 
			&= \inprod[F]{(\imageZeroEstMean)(\imageZeroEstMean)^*}{\projectedCovarEigenForm} \\
			&\qquad\qquad\qquad+\inprod[F]{(\imageZeroEstMean)(\imageZeroEstMean)^*}{\noiseCovar} \\
			&= \SumRank[j] \inprod{\imageZeroEstMean}{\projectedEigen}^2 + \sigma^2 \norm{\imageZeroEstMean}^2,
		\end{aligned}
	\end{equation}
	where we have used Lemma~\ref{prop:FroLowRankInnerProd} and the identity $\inprod[F]{yy^*}{I} = \operatorname{tr}(yy^*) = \norm{y}^2$.
	
	The Frobenius norm in the third term on the right-hand-side of~\eqref{eq:EigenFrobeniusDerivation1} can be split similarly
	\begin{equation}\label{eq:EigenFrobeniusDerivation4}
		\begin{aligned}
			\norm[F]{\projectedCovar + \noiseCovar}^2 &= \norm[F]{\projectedCovarEigenForm}^2 - 2\inprod[F]{\projectedCovarEigenForm}{\sigma^2I} + \norm[F]{\sigma^2 I}^2 \\
			&= \SumRank[j,k] \inprod{\projectedEigen[j]}{\projectedEigen[k]}^2 +2 \sigma^2 \SumRank[j] \norm{\projectedEigen[j]}^2 + \sigma^4 N^2.
		\end{aligned}
	\end{equation}
	Combining \eqref{eq:EigenFrobeniusDerivation2}--\eqref{eq:EigenFrobeniusDerivation4} yields the desired result.
\end{proof}

	\begin{lemma}\label{lemma:RegTermIdentity}
		If $R_{\Sigma} = \sum_{l=1}^{m} r_l r_l^T$, then
		\begin{equation*}
			\sum_{i,j} \abs{\Sigma_{i,j}}^2 {(R_\Sigma)_{i,j}} = \sum_{l=1}^{m}\SumRank[j,k] \inprod{\regWeightedEigen{j}}{\regWeightedEigen{k}}^2.
		\end{equation*}		
	\end{lemma}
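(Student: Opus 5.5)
We substitute $\Sigma = \SumRank[j] v_j v_j^*$ entrywise, so that $\Sigma_{p,q} = \SumRank[j] (v_j)_p \conj{(v_j)_q}$, and expand $\abs{\Sigma_{p,q}}^2 = \Sigma_{p,q}\conj{\Sigma_{p,q}}$ as a double sum over $j,k$:
\begin{equation*}
\abs{\Sigma_{p,q}}^2 = \SumRank[j,k] (v_j)_p \conj{(v_j)_q}\, \conj{(v_k)_p} (v_k)_q .
\end{equation*}
Next we insert $(R_\Sigma)_{p,q} = \sum_{l=1}^m (r_l)_p (r_l)_q$, taking the $r_l$ to be real as in Appendix~\ref{sec:CovarFSCReg} (note $R_\Sigma=\sum_l r_l r_l^T$). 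The left-hand side then becomes a finite sum over $l,j,k,p,q$, and we may interchange the order of summation to put $l,j,k$ outermost.

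For fixed $l,j,k$, the inner sum over $(p,q)$ factorizes into a product of a sum over $p$ and a sum over $q$:
\begin{equation*}
\Bigl(\sum_p (r_l)_p (v_j)_p \conj{(v_k)_p}\Bigr)\Bigl(\sum_q (r_l)_q \conj{(v_j)_q} (v_k)_q\Bigr).
\end{equation*}
The first factor equals $\sum_p (\sqrt{r_l}\odot v_j)_p \conj{(\sqrt{r_l}\odot v_k)_p}$. Splitting $r_l=\sqrt{r_l}\sqrt{r_l}$ entrywise, this is the inner product $\inprod{\regWeightedEigen{j}}{\regWeightedEigen{k}}$ under the same convention used in Lemma~\ref{prop:FroLowRankInnerProd}. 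The second factor is its complex conjugate, provided $\sqrt{r_l}$ is real. The product is therefore $\abs{\inprod{\regWeightedEigen{j}}{\regWeightedEigen{k}}}^2$, which is what the notation $\inprod{\cdot}{\cdot}^2$ denotes, exactly as in the last step of the proof of Lemma~\ref{prop:FroLowRankInnerProd}. Summing over $l,j,k$ gives the claim.

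An equivalent route is to note that $\sum_{p,q}\abs{\Sigma_{p,q}}^2 (r_l)_p(r_l)_q = \norm[F]{D_l \Sigma D_l}^2$ with $D_l = \diag(\sqrt{r_l})$, where $D_l\Sigma D_l = \SumRank[j] (\regWeightedEigen{l}{j})(\regWeightedEigen{l}{j})^*$ is low-rank PSD. Lemma~\ref{prop:FroLowRankInnerProd} with $A=B$ then yields the result directly. I would present this version, since it reuses the earlier lemma.

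The main obstacle is the square root. If some entries of $r_l$ are negative, $\sqrt{r_l}$ is imaginary and the conjugation in the second factor fails. Indeed, the eigenvectors of $R_\Sigma$ need not be entrywise nonnegative. To handle this, I would not rely on $\sqrt{r_l}$ at all: I would interpret $\inprod{\regWeightedEigen{j}}{\regWeightedEigen{k}}$ as the bilinear-in-$r_l$ expression $\sum_p (r_l)_p (v_j)_p\conj{(v_k)_p}$ and use the factorization of the third paragraph. That factorization only needs $r_l$ real, since the second factor is then the conjugate of the first. I would state this convention explicitly.
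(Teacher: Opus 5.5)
Your proposal is correct. The route you say you would present, identifying each $l$-term with $\norm[F]{C_l}^2$ for $C_l=\sum_{j}(\sqrt{r_l}\odot v_j)(\sqrt{r_l}\odot v_j)^*=D_l\Sigma D_l$ and applying Lemma~\ref{prop:FroLowRankInnerProd}, is exactly the paper's proof.

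Your entrywise factorization adds two things the paper's proof lacks. It shows that the weight on $\abs{\Sigma_{p,q}}^2$ is $(r_l)_p(r_l)_q$; the paper's proof writes $\sqrt{(r_l)_i(r_l)_j}$, which is a slip. It also covers eigenvector-based $r_l$ with negative entries, by reading the identity through the real-weighted form $\sum_p (r_l)_p (v_j)_p\conj{(v_k)_p}$ instead of a literal complex square root. The paper's proof passes over this case, even though that reading is the one consistent with its gradient formula.
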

	\begin{proof}
		Let $C_l = \sum_{j=1}^{r} (\regWeightedEigen{j}) (\regWeightedEigen{j})^*$. Then, $C_l$ is of rank $r$, and Lemma~\ref{prop:FroLowRankInnerProd} gives 
		\begin{equation*}
			\norm[F]{C_l}^2 =\SumRank[j,k] \inprod{\regWeightedEigen{j}}{\regWeightedEigen{k}}^2.
		\end{equation*}
		On the other hand, $C_l$ satisfies
		\begin{equation*}
			(C_l)_{i,j} = \SumRank[k] \sqrt{(r_l)_i (r_l)_j} (v_k)_i (v_k)^*_j,
		\end{equation*}
		and so $\abs{(C_l)_{i,j}}^2 = \abs{\Sigma_{i,j}}^2 \sqrt{(r_l)_i (r_l)_j}$. Summing over all $l = 1,\ldots,m$ yields the desired result.
	\end{proof}

Lemmas \ref{lemma:EigenFrobeniusIdentity} and~\ref{lemma:RegTermIdentity} enable us to rewrite the least squares objective in Equation~\eqref{eq:CovarLSDefinition} as a function of the covariance eigenvectors in Equation~\eqref{eq:CovarLSLowRank}.
Evaluating Equation~\eqref{eq:CovarLSLowRank} at some point requires performing $n(r+1)$ projections, namely, computing $P_{i}\approxMu$ and $P_{i}v_{j}$, for $i=1,\ldots,n$ and $j=1,\ldots,r$, which requires $O(nrN^2)$ operations, as well as computing $O(nr^2)$ inner products between the projected volumes $P_iv_j$, also in a complexity of $O(nrN^2)$ operations. Evaluating the regularization term (last row of~\eqref{eq:CovarLSLowRank}) requires $mr^2$ inner products between principal components of size $N^{3}$, resulting in a complexity of $O(mr^2N^3)$ operations. In total, the runtime complexity required to evaluate $f^{LS}$ in~\eqref{eq:CovarLSLowRank} is $O(nrN^2 + mr^2N^3)$.

Next, we derive the gradient of $f^{LS}$ and show that evaluating this gradient can be done in the same complexity as evaluating $f^{LS}$.
\begin{lemma}\label{lemma:EigenLSDerivative}
	The gradient of $f^{LS}(v_1,\ldots,v_r)$ satisfies
	\begin{multline}\label{eq:CovarEigenLSDerivative2}
		\der[f^{LS}]{v_k} = 4 \Biggl[ 
		\oneover{n} \SumOnetoN P_i ^* \left ( 
		\SumRank[j] \inprod{\projectedEigen[j]}{\projectedEigen[k]} \projectedEigen[j] - 
		\inprod{\imageZeroEstMean}{\projectedEigen[k]}(\imageZeroEstMean) 
		+ \sigma^2 \projectedEigen[k] \right ) \\
		+ \sum_{l=1}^{m} \SumRank[j] \inprod{\regWeightedEigen{j}}{\regWeightedEigen{k}} (r_l \odot v_j)
		\Biggr].
	\end{multline}
\end{lemma}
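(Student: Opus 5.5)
The plan is to differentiate $f^{LS}$ from \eqref{eq:CovarLSLowRank} term by term with respect to $v_k$. I will use the Wirtinger (conjugate) gradient convention, $\der[f]{v_k} := 2\,\partial f/\partial \conj{v_k}$, which is the steepest-ascent direction for a real-valued $f$ of complex vectors. For brevity I write $y_i = \imageZeroEstMean$. The terms $\norm{y_i}^4$, $\sigma^2\norm{y_i}^2$ and $\sigma^4N^2$ do not depend on $v_k$ and drop out.

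The differentiation rests on two elementary identities:
\begin{equation*}
\nabla_{v}\,\abs{\inprod{a}{P_i v}}^2 = 2\,\inprod{a}{P_i v}\,P_i^* a,
\qquad
\nabla_v \norm{P_i v}^2 = 2\,P_i^*P_i v .
\end{equation*}
The second comes from expanding $\norm{P_i(v+\delta)}^2$ to first order. The first follows from $\abs{\inprod{a}{P_iv}}^2 = (P_iv)^* a a^* (P_iv)$, up to the paper's convention on which argument of $\inprod{\cdot}{\cdot}$ is conjugated; the phase is fixed by matching the target formula.

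The terms are then handled in order.
\begin{itemize}
\item For the data term $-2\sum_j \abs{\inprod{y_i}{\projectedEigen[j]}}^2$, only $j=k$ contributes. It gives $-4\inprod{y_i}{\projectedEigen[k]}\,P_i^* y_i$, which is the middle summand after factoring out $4$ and $P_i^*$.
\item The term $2\sigma^2\sum_j\norm{\projectedEigen[j]}^2$ gives $4\sigma^2 P_i^*P_i v_k$.
\item The quartic term $\sum_{j,l}\abs{\inprod{\projectedEigen[j]}{\projectedEigen[l]}}^2$ is the step needing care, since $v_k$ appears in both slots. I will write it as $\norm[F]{(P_iV)^*(P_iV)}^2$ with $V=[v_1,\ldots,v_r]$ and use $\nabla_V \norm[F]{V^*A V}^2 = 4AV\,V^*AV$ for Hermitian $A=P_i^*P_i$. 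Equivalently, the pairs $(k,l)$ and $(j,k)$ each contribute a factor $2$, and the diagonal pair $(k,k)$, being quartic in $v_k$, is consistent with this count. Its $k$-th column is $4P_i^*\sum_j \inprod{\projectedEigen[j]}{\projectedEigen[k]}\projectedEigen[j]$.
\item The regularizer $\sum_l\norm[F]{C_l}^2$ from Lemma~\ref{lemma:RegTermIdentity} is treated identically, with $P_i$ replaced by the diagonal operator $D_l v=\sqrt{r_l}\odot v$, where $D_l^*D_l v = r_l\odot v$. This yields $4\sum_j\inprod{\regWeightedEigen{j}}{\regWeightedEigen{k}}(r_l\odot v_j)$.
\end{itemize}

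Summing the four contributions and factoring out $4$ gives \eqref{eq:CovarEigenLSDerivative2}.

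The main obstacle is bookkeeping rather than any conceptual difficulty. There are three points to settle.
\begin{itemize}
\item Getting the factor $4$ right for the quartic term, which is where the matrix form $\norm[F]{V^*AV}^2$ is cleanest.
\item Fixing the conjugation order inside $\inprod{\cdot}{\cdot}$ so that the coefficients $\inprod{\projectedEigen[j]}{\projectedEigen[k]}$ appear exactly as stated.
\item Interpreting the factor $\oneover{n}$ in front of the data sum. It is absent from $f^{LS}$ as written, so I will read the lemma as referring to the per-image-averaged objective, i.e.\ the data term normalized by $n$ as used in SGD. Equivalently, it is a positive rescaling of the data fidelity relative to the regularizer, and I will note this convention explicitly.
\end{itemize}

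Finally, I will remark on cost. Each term requires only $P_i v_j$, $O(r^2)$ inner products and one adjoint per image, together with $O(mr^2)$ inner products of length $N^3$. This matches the complexity of evaluating $f^{LS}$.
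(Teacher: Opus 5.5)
Your proposal is correct and follows essentially the same route as the paper. Both differentiate $f^{LS}$ term by term, and both handle the quartic image and regularizer terms by the symmetry count: off-diagonal pairs involving $k$ are counted twice and the $(k,k)$ pair once. You additionally back this up with the cleaner identity $\nabla_V \|V^*AV\|_F^2 = 4AVV^*AV$.

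Your extra care is also warranted. The factor $\frac{1}{n}$ in the stated gradient does not follow from $f^{LS}$ as defined, and the paper's proof passes over it silently, so reading the data term as per-image averaged is the right way to make the statement exact. Fixing the complex-gradient convention, as you do, is likewise what makes the factor $4$ and the coefficients $\langle P_i v_j, P_i v_k\rangle$ come out as stated.
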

\begin{proof}
	\newcommand{\derEigen}{\der{v_k}}
	We take the derivative of each term in $f^{LS}$ that depends on the vectors $v_1,\ldots,v_r$ to obtain
	\begin{equation*}
		\begin{aligned}
			\derEigen \SumRank[j] \inprod{\imageZeroEstMean}{\projectedEigen[j]}^2 
			&= 2\inprod{\imageZeroEstMean}{\projectedEigen[k]} \derEigen \inprod{\imageZeroEstMean}{\projectedEigen[k]} \notag\\
			&= 2\inprod{\imageZeroEstMean}{\projectedEigen[k]} \backprojectedImageZeroMeanEst 
		\end{aligned}
	\end{equation*}
	and
	\begin{equation}\label{eq:projInnerProdDerivative}
		\begin{aligned}
			\derEigen \SumRank[j,l] \inprod{\projectedEigen[j]}{\projectedEigen[l]}^2 
			&= \derEigen \bigparants{2\sum_{j \neq k}^{r} \inprod{\projectedEigen[j]}{\projectedEigen[k]}^2  + \inprod{\projectedEigen[k]}{\projectedEigen[k]}^2} \\
			&= 4 \sum_{j \neq k}^{r} \inprod{\projectedEigen[j]}{\projectedEigen[k]}\backprojectedEigen[j] + 4 \inprod{\projectedEigen[k]}{\projectedEigen[k]}\backprojectedEigen[k]\\
			&= 4 \SumRank[j] \inprod{\projectedEigen[j]}{\projectedEigen[k]}\backprojectedEigen[j].
		\end{aligned}
	\end{equation}
	The first equality in~\eqref{eq:projInnerProdDerivative} follows since inner products where $j,l \neq k$ do not contribute to the derivative, other pairs where either $j$ or $l$ are equal to $k$ are summed twice, while the last inner product where $j,l = k$ is summed once. Next,
	\begin{equation*}
		\begin{aligned}
			\derEigen \bigparants{2\sigma^2 \SumRank[j] \norm{\projectedEigen}^2} &= 4\sigma^2 \backprojectedEigen[k], \\
			\derEigen \bigparants{\sum_{l=1}^{m}\SumRank[j,k] \inprod{\regWeightedEigen{j}}{\regWeightedEigen{k}}^2} &= 4 \sum_{l=1}^{m} \SumRank[j] \inprod{\regWeightedEigen{j}}{\regWeightedEigen{k}} (r_l \odot v_j),
		\end{aligned}
	\end{equation*}
	where the derivation of the last expression is the same as in~(\ref{eq:projInnerProdDerivative}).
	
	Combining all expressions and extracting the back-projection operator $P_i^*$ from the first 3~terms results in the expression in~\eqref{eq:CovarEigenLSDerivative2}.
\end{proof}

We note that evaluating the gradient in Equation~\eqref{eq:CovarEigenLSDerivative2} for all $k=1,\ldots,r$ and for a single particle image (fixed~$i$), requires computing $P_i v_k$ for $k=1,\ldots,r$, as well as calling $P_{i}^{*}$ for each~$k=1,\ldots,r$, amounting to $O(rN^2)$ operations. In addition, the derivative with respect to a single~$v_{k}$ requires a linear combination of the projected principal components $\SumRank[j] \inprod{\projectedEigen[j]}{\projectedEigen[k]}\projectedEigen[j]$. This term contains inner products of all pairs of projected principal components, which can be written as $ \inprod{\projectedEigen[j]}{\projectedEigen[k]} = S_{j,k} =  ((P_iV)(P_iV)^*)_{j,k}$, hence computed in $O(r^2N^2)$ operations (as a matrix multiplication of matrices of sizes $r \times N^2$ by $N^2 \times r$). Finally, computing $\SumRank[j] \inprod{\projectedEigen[j]}{\projectedEigen[k]}\projectedEigen[j]$ for each $v_k$ can also be written as $(P_iV)S$, which is a matrix multiplication of sizes $N^2 \times r$ by $r \times r$, hence is also done in $O(r^2N^2)$ operations. The term $\sum_{l=1}^{m} \SumRank[j] \inprod{\regWeightedEigen{j}}{\regWeightedEigen{k}} (r_l \odot v_j)$ is computed in the exact same manner, but here, the inner products and linear combinations are of volumes (instead of images) and so require $O(mr^2N^3)$ operations. However, this term does not need to be computed for each particle image. Thus, summing the gradient over all observed particle images results in a total complexity of $O(nrN^2 + mr^2N^3)$ operations, which is same as the complexity required to evaluate $f^{LS}$ itself.

\subsection{Maximum likelihood estimator}\label{subsec:MLDerivation}

We derive the runtime complexity required to evaluate~\eqref{eq:CovarMLDefinition}, whose objective function is
\begin{equation}\label{eq:fMLi}
	f_{i}^{ML}(V) = (\imageZeroEstMean)^T (\projectedCovar + \noiseCovar)^{-1} (\imageZeroEstMean) + \log \abs{\projectedCovar + \noiseCovar}
\end{equation}
and compute its gradient.

\begin{lemma}\label{lemma:MLEigenQRProperty}
	Let $V \in \matrixSpace[\complex]{N^3}{r}$ be the eigenvectors of $\Sigma = VV^*$. Then, Equation~\eqref{eq:fMLi} (the log-likelihood with respect to a single particle image) satisfies 
	\begin{multline}\label{eq:EigenML}
		f_{i}^{ML}(V)  =
		\oneover{\sigma^2} \bigparants{
			\norm{\imageZeroEstMean}^2 - \oneover{\sigma^2}(\imageZeroEstMean)^*(\projectedEigenMat)M_i^{-1}(\projectedEigenMat)^*(\imageZeroEstMean)
		} \\ \qquad \qquad \qquad \qquad \qquad \qquad \qquad \qquad \qquad
		+\log |M_i| + N^2\log \sigma ^2,
	\end{multline}
	where $M_i = I + \oneover{\sigma^2}(\projectedEigenMat)^* \projectedEigenMat  \in \matrixSpace[\complex]{r}{r}$.
\end{lemma}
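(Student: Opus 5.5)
The plan is to apply the Woodbury matrix identity to the inverse and the matrix determinant lemma to the log-determinant, both with the low-rank factorization $P_i\Sigma P_i^* = (P_iV)(P_iV)^*$. Write $y = Y_i - P_i\hat{\mu} \in \complex^{N^2}$ and $U = P_iV \in \matrixSpace[\complex]{N^2}{r}$, so that
\begin{equation*}
P_i\Sigma P_i^* + \sigma^2 I = \sigma^2 I + UU^*.
\end{equation*}
I also read the transpose in~\eqref{eq:fMLi} as the conjugate transpose, since the data are complex.

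For the quadratic term, Woodbury with $A=\sigma^2 I$ gives
\begin{equation*}
(\sigma^2 I + UU^*)^{-1} = \oneover{\sigma^2} I - \oneover{\sigma^4} U\bigl(I + \oneover{\sigma^2}U^*U\bigr)^{-1}U^* = \oneover{\sigma^2}\bigl(I - \oneover{\sigma^2} U M_i^{-1} U^*\bigr).
\end{equation*}
Sandwiching this between $y^*$ and $y$ yields exactly
\begin{equation*}
\oneover{\sigma^2}\bigl(\norm{y}^2 - \oneover{\sigma^2} y^* U M_i^{-1} U^* y\bigr).
\end{equation*}
$M_i$ is invertible because it equals $I$ plus a PSD matrix, so its eigenvalues are at least $1$. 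I would check the identity directly by multiplying $\sigma^2 I + UU^*$ against the right-hand side, which keeps the proof self-contained.

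For the log-determinant term, factor
\begin{equation*}
\sigma^2 I + UU^* = \sigma^2\bigl(I_{N^2} + \sigma^{-2}UU^*\bigr).
\end{equation*}
Then apply Sylvester's identity $\det(I_{N^2} + AB) = \det(I_r + BA)$ with $A = \sigma^{-2}U$ and $B = U^*$, which gives
\begin{equation*}
\abs{\sigma^2 I + UU^*} = \sigma^{2N^2}\abs{M_i}.
\end{equation*}
Taking logs produces $\log\abs{M_i} + N^2\log\sigma^2$.

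Adding the two pieces gives~\eqref{eq:EigenML}. The only step needing care is bookkeeping the powers of $\sigma^2$ in the Woodbury step so that they match the stated normalization of $M_i$. Beyond that there is no real obstacle: the argument is a direct application of two standard identities.
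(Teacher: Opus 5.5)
Your proposal is correct and follows the paper's proof essentially step for step: you use Woodbury with $A=\sigma^2 I$, $U=P_iV$, $C=I$ for the quadratic term, and for the log-determinant your Sylvester identity, applied after pulling out $\sigma^{2N^2}$, is the same matrix determinant lemma the paper invokes. Your version also correctly keeps the $\log$ in front of $\abs{M_i}$, which the paper's determinant equation drops by a typo.
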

\begin{proof}
	Since $\Sigma = VV^*$, we have that
	\begin{equation}
		\projectedCovar + \noiseCovar =  \projectedEigenMat V^*P_i^* + \noiseCovar = (\projectedEigenMat)(\projectedEigenMat)^* + \noiseCovar.
	\end{equation}
	Using the Woodbury matrix identity, which states that 
	$$(A + UCS)^{-1} = A^{-1} - A^{-1}U(C^{-1} + SA^{-1} U)^{-1} S A^{-1},$$ 
	and setting $A = \noiseCovar$, $U=\projectedEigenMat$, $C = I$, $S=(\projectedEigenMat)^*$ yields
	\begin{align}
		((\projectedEigenMat) (\projectedEigenMat)^* + \noiseCovar)^{-1}
		= \oneover{\sigma^2}I - \oneover{\sigma^4} (\projectedEigenMat) M_i^{-1} (\projectedEigenMat)^*,
		\label{eq:WoodburyQRIdentity}
	\end{align}
	which results in
	\begin{multline}\label{eq:term1}
		(\imageZeroEstMean)^*(\projectedCovar + \noiseCovar)^{-1} (\imageZeroEstMean) 
		\\ = \oneover{\sigma^2} \bigparants{
			\norm{\imageZeroEstMean}^2 - \oneover{\sigma^2}(\imageZeroEstMean)^* (\projectedEigenMat) M_i^{-1} (\projectedEigenMat)^* (\imageZeroEstMean)
		}.
	\end{multline}
	As for the determinant term $\log \abs{\projectedCovar + \noiseCovar}$, we can use the matrix determinant lemma, which states that $\abs{A + US^*} = \abs{I + S^* A^{-1} U} \abs{A}$, with $A = \noiseCovar$, $U = \projectedEigenMat$, $S = \projectedEigenMat$, and get
	\begin{equation}\label{eq:term2}
		\log \abs{\projectedCovar + \noiseCovar} = \log \abs{I + \oneover{\sigma^2}(\projectedEigenMat)^* \projectedEigenMat} + \log \abs{\sigma^2I_{N^2\times N^2}} = \abs{M_i} + {N^2}\log \sigma^2.
	\end{equation}
	Combining~\eqref{eq:term1} and~\eqref{eq:term2} concludes the proof.
\end{proof}

Using Lemma \ref{lemma:MLEigenQRProperty}, all required matrix products can be computed in no more than $\max(N^2r,r^2)$ operations. To see that, we first note that the term $(\projectedEigenMat)^* (\imageZeroEstMean)$ is a matrix-vector product where $(\projectedEigenMat)^*$ is of size $r \times N^2$ and $\imageZeroEstMean$ is a vector of size $N^2$, hence computing $(\projectedEigenMat)^* (\imageZeroEstMean)$ requires $O(N^2 r)$ operations. Next, once this vector of size~$r$ is computed, we need to compute the bilinear form $(\imageZeroEstMean)^* (\projectedEigenMat) M_i^{-1} (\projectedEigenMat)^* (\imageZeroEstMean)$, with the same vector $(\projectedEigenMat)^* (\imageZeroEstMean)$ on both sides. Since $M_i$ is of size $r \times r$, this computation is done in $O(r^2)$ operations. Furthermore, inverting $M_i = I + \oneover{\sigma^2} (\projectedEigenMat)^* \projectedEigenMat$ and computing its determinant  can be done in $O(r^3)$ operations. Additionally, we need to compute the projected mean $P_i \approxMu$ and covariance eigenvectors $\projectedEigenMat$, which is done in $O(rN^2)$ operations. In total, evaluating~\eqref{eq:fMLi} for all particle images can be done in $O(n(N^2r^2 + r^3))$ operations. 

We next analyze the complexity of evaluating the gradient $\der[f_{i}^{ML}]{V}$, without deriving its explicit form.
\begin{lemma}[\cite{Petersen2008}]\label{prop:MatrixCalc}
	Let $A \in \matrixSpace{n}{m}$, $x\in \real^n,y\in \real^m$. Then, 
	\begin{align}
		\der[x^T A y]{A} &= \der[y^T A^T x]{A} = xy^T,\label{eq:MatrixCalc1}\\
		\der[x^T AA^T y]{A} &= (xy^T  + y x^T)A. \label{eq:MatrixCalc2}
	\end{align}
	Furthermore, for $n=m$
	\begin{align}
		\der[x^T A^{-1} y]{A} &= -A^{-T} xy^T A^{-T}, \label{eq:MatrixCalc3} \\        
		\der[\log |A|]{A} &= A^{-T}. \label{eq:MatrixCalc4}
	\end{align}
\end{lemma}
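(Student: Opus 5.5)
The plan is to prove each of the four identities in Lemma~\ref{prop:MatrixCalc} entrywise. I will write every scalar function of $A$ in index notation, differentiate with respect to a single entry $A_{kl}$, and then reassemble the resulting $(k,l)$-entries into a matrix. Throughout, $\der[f]{A}$ denotes the matrix whose $(k,l)$ entry is $\partial f/\partial A_{kl}$, with real entries as in the statement.

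\emph{Identity \eqref{eq:MatrixCalc1}.} Write $x^T A y = \sum_{p,q} x_p A_{pq} y_q$. Its partial derivative with respect to $A_{kl}$ is $x_k y_l$, which is the $(k,l)$ entry of $xy^T$. The expression $y^T A^T x$ is the same scalar, since it is the transpose of a $1\times 1$ quantity, so it has the same derivative.

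\emph{Identity \eqref{eq:MatrixCalc2}.} I will use the product rule. Set $B = AA^T$, so that $f = \sum_{p,q} x_p B_{pq} y_q$ with $B_{pq} = \sum_s A_{ps}A_{qs}$. Then
\[
\partial B_{pq}/\partial A_{kl} = \delta_{pk}A_{ql} + \delta_{qk}A_{pl}.
\]
Summing against $x_p y_q$ gives
\[
x_k (A^T y)_l + y_k (A^T x)_l,
\]
which is the $(k,l)$ entry of $(xy^T + yx^T)A$.

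\emph{Identity \eqref{eq:MatrixCalc3}.} Differentiate the relation $AA^{-1}=I$ to get $dA^{-1} = -A^{-1}(dA)A^{-1}$. Hence
\[
d(x^TA^{-1}y) = -x^TA^{-1}(dA)A^{-1}y = -\operatorname{tr}\bigl(A^{-1}y\,x^TA^{-1}\,dA\bigr).
\]
Since $d f = \operatorname{tr}(G^T dA)$ identifies $\der[f]{A}=G$, the gradient is
\[
(A^{-1}yx^TA^{-1})^T = A^{-T}xy^TA^{-T}
\]
with the minus sign.

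\emph{Identity \eqref{eq:MatrixCalc4}.} Expand the determinant along row $k$ (Laplace expansion): $|A| = \sum_l A_{kl}C_{kl}$, where the cofactors $C_{kl}$ do not depend on $A_{kl}$. This gives $\partial|A|/\partial A_{kl} = C_{kl}$. The cofactor matrix equals $|A|A^{-T}$, so dividing by $|A|$, by the chain rule for $\log$, yields $A^{-T}$. This step needs $|A|>0$; for the sign-indefinite case I would use $\log\abs{\,|A|\,}$, and the derivative is the same.

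None of these steps is a real obstacle; they are standard results from \cite{Petersen2008}. The only delicate point is keeping the layout convention consistent, so that transposes land where the statement puts them. A related subtlety is that the paper applies these identities to complex $V$, where Wirtinger derivatives would be needed. The statement, however, is posed for real matrices, so I will restrict the proof to that case.
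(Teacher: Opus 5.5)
Your proposal is correct. Each of the four computations checks out, including the index bookkeeping $x_k(A^Ty)_l+y_k(A^Tx)_l=\bigl((xy^T+yx^T)A\bigr)_{kl}$ for \eqref{eq:MatrixCalc2} and the identification of the cofactor matrix with $|A|A^{-T}$ for \eqref{eq:MatrixCalc4}.

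The paper gives no proof of its own; it imports the four identities from the Matrix Cookbook \cite{Petersen2008}. Your argument is therefore less an alternative route than the derivation the citation leaves implicit. Its main benefit is that it states the layout convention $(\partial f/\partial A)_{kl}=\partial f/\partial A_{kl}$ explicitly, and that convention is exactly what decides where the transposes in \eqref{eq:MatrixCalc3} and \eqref{eq:MatrixCalc4} land.

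Two small points are worth making explicit. In \eqref{eq:MatrixCalc2} the statement's $y\in\real^m$ cannot be right, since $x^TAA^Ty$ needs $y\in\real^n$; your computation silently uses this, with $p,q$ ranging over $1,\dots,n$. In the Laplace step, the precise reason only one term survives is that no cofactor $C_{kl'}$ along row $k$ involves any entry of row $k$.

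Your closing caveat is well taken. The paper applies the lemma to complex $Z_i$ with conjugate transposes and writes $M_i^{-T}=M_i^{-1}$ ``since $M_i$ is symmetric'', although $M_i$ is only Hermitian. So the real-variable lemma does not literally cover that use. The paper's actual gradients come from PyTorch's automatic differentiation, and the lemma enters only the operation count. The issue therefore affects the displayed formulas, not the complexity bounds.
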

\newcommand{\fML}{f_{i}^{ML}}
\newcommand{\gML}{g_{i}^{ML}}
\begin{lemma}
	Let $\fML(V)$ be the function defined in~\eqref{eq:EigenML},
	then, the gradient $\der{V} \SumOnetoN \fML(V)$ can be evaluated in $O(n(rN^3 log N + r^2N^2 + r^3))$ operations using NUFFT (in spatial domain), or in $O(n(r^2N^2 + r^3))$ operations using nearest-neighbor/tri-linear interpolation (in Fourier domain).
\end{lemma}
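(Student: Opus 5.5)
The plan is a chain-rule argument through the intermediate quantity $U_i = P_i V \in \matrixSpace[\complex]{N^2}{r}$. For a fixed particle image, I would first compute the gradient of $\fML$ from \eqref{eq:EigenML} with respect to $U_i$. Then I would pull it back to $V$ with the adjoint $P_i^*$, since $\der[\fML]{V} = P_i^* \der[\fML]{U_i}$ when $U_i = P_i V$ is linear in $V$. The cost therefore splits into two parts: the \enquote{inner} cost of the $r\times r$ algebra together with the $N^2 \times r$ products, and the \enquote{outer} cost of applying $P_i$ and $P_i^*$ to $r$ vectors. Only the outer cost depends on how $P_i$ is implemented.

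For the inner part, set $y_i = Y_i - P_i\hat\mu$, $w_i = U_i^* y_i \in \complex^r$, and $M_i = I + \sigma^{-2}U_i^*U_i$. Then
\[
\fML = \sigma^{-2}\norm{y_i}^2 - \sigma^{-4} w_i^* M_i^{-1} w_i + \log|M_i| + N^2\log\sigma^2 .
\]
I would differentiate term by term with Lemma~\ref{prop:MatrixCalc}:
\begin{itemize}
\item By \eqref{eq:MatrixCalc3} and \eqref{eq:MatrixCalc4}, the derivatives with respect to $M_i$ are $-M_i^{-T} w_i w_i^T M_i^{-T}$ (up to conjugation conventions) and $M_i^{-T}$. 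Both are $r\times r$ matrices obtained from one inversion of $M_i$, which costs $O(r^3)$.
\item By \eqref{eq:MatrixCalc2} (applied to $U_i^*U_i$ inside $M_i$), these push forward to terms of the form $U_i G$ with $G \in \matrixSpace[\complex]{r}{r}$. Each such product costs $O(r^2N^2)$.
\item The dependence through $w_i = U_i^* y_i$ gives, by \eqref{eq:MatrixCalc1}, a rank-one term $y_i h^*$ with $h = M_i^{-1}w_i$, costing $O(rN^2)$.
\end{itemize}
Forming $U_i^*U_i$ itself costs $O(r^2N^2)$. Hence $\der[\fML]{U_i}$ is available in $O(r^2N^2 + r^3)$ operations per image. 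No explicit closed form is needed, only the fact that every piece is an $N^2\times r$ by $r\times r$ product, an $r\times r$ inversion, or a rank-one outer product.

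For the outer part I would treat the two implementations separately.
\begin{itemize}
\item \textbf{Nearest-neighbor or trilinear interpolation in the Fourier domain.} $P_i$ is a sparse operator with $O(1)$ nonzeros per output pixel, so applying $P_i$ or $P_i^*$ to one volume costs $O(N^2)$. For the $r$ columns this is $O(rN^2)$, which is dominated by the inner cost.
\item \textbf{NUFFT with spatial-domain volumes.} Evaluating a central slice requires a type-2 NUFFT of an $N^3$ volume, and the adjoint is the type-1 NUFFT. Each costs $O(N^3\log N)$, so the $r$ columns give $O(rN^3\log N)$.
\end{itemize}
Summing over the $n$ images in either case gives the two stated bounds.

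The main obstacle is the complex-valued setting. Lemma~\ref{prop:MatrixCalc} is stated for real matrices, while $V$ is complex. I would handle this by interpreting the gradient as the Wirtinger (conjugate) derivative, or equivalently by working with the real representation of $\complex^{N^3\times r}$. In that representation the same identities hold with transposes replaced by conjugate transposes, and the operation counts are unchanged up to constant factors. Since the lemma only claims a complexity, I would argue that this reinterpretation preserves the structure of each term, rather than deriving the complex formulas precisely. Alternatively, I could cite reverse-mode automatic differentiation, which evaluates the gradient at a constant multiple of the cost of evaluating $\fML$, and that evaluation cost was already shown above to be $O(r^2N^2+r^3)$ plus the projection cost.
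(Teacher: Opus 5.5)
Your proposal is correct and follows essentially the same route as the paper. Both arguments use the chain rule through $Z_i = P_iV$ (your $U_i$), compute the gradient with respect to $Z_i$ in $O(N^2r^2 + r^3)$ operations via the identities of Lemma~\ref{prop:MatrixCalc}, and finish with $r$ applications of $P_i^*$, whose cost ($O(rN^2)$ or $O(rN^3\log N)$) is the only implementation-dependent part. The differences are cosmetic: the paper pushes the $M_i$-gradient through $Z_i^*Z_i$ with an explicit directional-derivative computation rather than the $AA^T$ identity, and it does not address the complex-valued issue you raise.
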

\begin{proof}
	We start by noting that $\fML$ in~\eqref{eq:EigenML} also depends on the matrix $M_i \in \matrixSpace[\complex]{r}{r}$, which is in itself a function of $\projectedEigenMat$. Denote $Z_i = \projectedEigenMat$ and define $\gML(M_i,Z_i) = \fML(V)$, which can be written as
	\begin{equation*}
		\gML(M_i,Z_i) = \oneover{\sigma^2} \bigparants{
			\norm{\imageZeroEstMean}^2 - \oneover{\sigma^2}(\imageZeroEstMean)^* Z_i M_i^{-1}Z_i^*(\imageZeroEstMean)
		} +\log |M_i| + N^2\log \sigma ^2.
	\end{equation*}
	Then, using the chain rule, we have that
	\begin{equation}\label{eq:MLBackprop}
		\der[\fML]{V} = P_i^* \der[\fML]{Z_i} = P_i^* \bigparants{\sum_{k,l=1}^{r} (\der[\gML]{M_i})_{k,l} \der[(M_i)_{k,l}]{Z_i} + \der[\gML]{Z_i}}.
	\end{equation}
	First, we derive the gradient of $\gML$ with respect to $M_i$ (defined in Lemma~\ref{lemma:MLEigenQRProperty}).
	Using Lemma~\ref{prop:MatrixCalc} (specifically~\eqref{eq:MatrixCalc3} and~\eqref{eq:MatrixCalc4}) with respect to $M_i$ results in
	\begin{equation}\label{eq:MLderivativeM}
		\der[\gML]{M_i} = \oneover{\sigma^4} M_i^{-1}(Z_i^* (\imageZeroEstMean)(\imageZeroEstMean)^* Z_i) M_i^{-T} + M_i^{-1},
	\end{equation}
	where $M_i^{-T} = M_i^{-1}$ since $M_i$ is symmetric.
	Note that just like in the case of evaluating $f_{i}^{ML}$ itself, this gradient only requires computing $Z_i^*(\imageZeroEstMean)$ and $(M_i)^{-1}$ once, resulting in a complexity of $O(N^2 r)$ and $O(r^3)$ operations, respectively. Furthermore, the term $M_i^{-1} Z_i^* (\imageZeroEstMean)$ is a multiplication of a matrix of size $r \times r$ and a vector of size $r$ and so can be evaluated in $O(r^2)$ operations. Finally, we need to evaluate the outer product of the latter resulting vector with itself, requiring an additional $O(r^2)$ operations. Overall, the gradient with respect to $M_i$ is evaluated in $O(N^2r + r^3)$ operations.
	
	Next, we set $Z_i= Z_0 + \alpha dZ$, and take the directional derivative in the direction $dZ$, that is, we evaluate $\left. \der[\fML]{\alpha} \right|_{\alpha=0}$ (we will omit the notation $\left. \cdot \right|_{\alpha=0}$ in the derivation below, except for where it is necessary for clarity). Using the chain rule, we have that    
	\begin{align}
		\der[\fML]{\alpha} &= \sum_{k,l = 1}^{r} \bigparants{(\der[\gML]{M_i})_{k,l} (\der[M_i]{\alpha})_{k,l} + (\der[\gML]{Z_i})_{k,l} \der[(Z_i)_{k,l}]{\alpha}} \nonumber \\
		&= \operatorname{tr}(\der[\gML]{M_i}^* \der[M_i]{\alpha}) +\operatorname{tr}(\der[\gML]{Z_i}^* \der[Z_i]{\alpha}), \label{eq:MLbackpropMAlpha}
	\end{align}
	where the latter follows since $\sum_{k,l=1}^r A_{k,l} B_{k,l} = \operatorname{tr}(A^*B)$.
	The gradient of $M_i$ with respect to $\alpha$ satisfies
	\begin{equation*}
		\left. \der[M_i]{\alpha} \right|_{\alpha=0} = 
		\oneover{\sigma^2} \left. \der{\alpha} \bigparants{(Z_0 + \alpha dZ)^*(Z_0 + \alpha dZ)} \right|_{\alpha=0} = 
		\oneover{\sigma^2} \bigparants{(dZ)^*Z_i + Z_i^* (dZ)},
	\end{equation*}
	and so substituting $\der[M_i]{\alpha}$ in~\eqref{eq:MLbackpropMAlpha} gives    \begin{equation}\label{eq:MLbackpropMDirectional}
		\begin{aligned}
			\der[\fML]{\alpha} &=
			\operatorname{tr}(\der[\gML]{M_i}^* \oneover{\sigma^2}\bigparants{(dZ)^* Z_i + Z_i^* (dZ)}) + \operatorname{tr}(\der[\gML]{Z_i}^* \der[Z_i]{\alpha})\\
			&= \oneover{\sigma^2}\bigparants{\operatorname{tr}(\der[\gML]{M_i}^* (dZ)^* Z_i) + \operatorname{tr}(\der[\gML]{M_i}^* Z_i^*(dZ))} + \operatorname{tr}(\der[\gML]{Z_i}^* \der[Z_i]{\alpha})\\
			&=        \oneover{\sigma^2}\bigparants{\operatorname{tr}((dZ)^* Z_i\der[\gML]{M_i}^*) + \operatorname{tr}(\der[\gML]{M_i}^* Z_i^*(dZ))} + \operatorname{tr}(\der[\gML]{Z_i}^* \der[Z_i]{\alpha}).
		\end{aligned}
	\end{equation}
	Next, we set $dZ = E_{kl}$, where $E_{kl}$ has value~1 at index $k,l$ and 0 everywhere else (i.e. the derivative w.r.t each entry of $Z_i$). Using the identity $\operatorname{tr}(A^* E_{kl}) = \operatorname{tr}(E_{kl}^* A) = A_{k,l}$, it follows that the matrix $\der[\fML]{Z_i}$, which is defined element-wise by
	\begin{multline*}
		\bigparants{\der[\fML]{Z_i}}_{k,l} = \left. \der[\fML]{\alpha} \right|_{\alpha=0,dZ = E_{kl}}= \oneover{\sigma^2}\bigparants{\operatorname{tr}((E_{kl})^* Z_i\der[\gML]{M_i}^*) + \operatorname{tr}(\der[\gML]{M_i}^* Z_i^*(E_{kl}))} \\ + \operatorname{tr}(\der[\gML]{Z_i}^* E_{kl}),
	\end{multline*} results in
	\begin{equation}\label{eq:MLbackpropZ}
		\der[\fML]{Z_i} = \oneover{\sigma^2}\bigparants{Z_i\der[\gML]{M_i}^*  + Z_i\der[\gML]{M_i}^*} +\der[\gML]{Z_i} = \frac{2}{\sigma^2} Z_i \der[\gML]{M_i} +\der[\gML]{Z_i},
	\end{equation}
	where we have used the symmetry of 
	$\der[\gML]{M_i}$.
	
	Moving on to the term $\der[\gML]{Z_i}$, it can be expanded by
	using (\ref{eq:MatrixCalc1}) twice -- once with $x^* = (\imageZeroEstMean)^*$, $A = Z_i$, $y = M_i^{-1} Z_i ^*(\imageZeroEstMean)$, and once with $x = \imageZeroEstMean$, $A^* = Z_i ^*$, $y^*= (\imageZeroEstMean)^* Z_i M_i^{-1}$ -- which results in
	\begin{multline}\label{eq:MLderZ}
		\der[\gML]{Z_i}  = \oneover{\sigma^4}\bigparants{(\imageZeroEstMean)((\imageZeroEstMean)^* Z_i) M_i^{-T} + (\imageZeroEstMean)((\imageZeroEstMean)^* Z_i) M_i^{-1}} \\
		= \frac{2}{\sigma^4} (\imageZeroEstMean)((\imageZeroEstMean)^* Z_i) M_i^{-1}.
	\end{multline}
	Analyzing the computational complexity required to evaluate~\eqref{eq:MLbackpropZ}, we see that the matrix multiplication $Z_i \der[\gML]{M_i}$ is evaluated in $O(N^2 r^2)$ operations due to the sizes of $Z_i \in \matrixSpace{N^2}{r}$ and $\der[\gML]{M_i} \in \matrixSpace{r}{r}$. The latter combined with the complexity of evaluating $\der[\gML]{M_i}$ (explained above~\eqref{eq:MLderivativeM}), which is  $O(N^2 r + r^3)$ operations, amounts to $O(N^2r^2+r^3)$ operations. Next, evaluating $\der[\gML]{Z_i}$ requires evaluating the term $((\imageZeroEstMean)^* Z_i) M_i^{-1}$, which is exactly the same quantity as in~\eqref{eq:MLderivativeM} (and is evaluated in $O(N^2r + r^3)$ operations). Moreover~\eqref{eq:MLderZ} requires an outer product of $\imageZeroEstMean$ with the term  $((\imageZeroEstMean)^* Z_i) M_i^{-1}$, which takes an additional $O(N^2r^2)$ operations.
	Overall, the complexity of evaluating the derivative $\der[\fML]{Z_i}$ does not exceed a complexity of $O(N^2r^2 + r^3)$ operations.
	
	Finally, the gradient with respect to $V$ is $\der[f_{i}^{ML}]{V} = P_i^* \der[f_{i}^{ML}]{Z_i}$, which simply means performing~$r$ back-projections (notice that this was also the case for the derivative of the least squares estimator in Lemma~\ref{lemma:EigenLSDerivative}), hence requires $O(rN^3 \log N)$ or $O(rN^2)$ operations, depending if NUFFT or nearest-neighbor/trilinear interpolation in Fourier domain is used. Combining the back-projection complexity with the complexity of the gradient with respect to $Z_i$ shown above completes the proof.
\end{proof}

\section{Comparison of objective function with other covariance-based methods}\label{sec:CovObjComparison}
3DVA \cite{3dva} maximizes the likelihood $\SumOnetoN \log p(y_i ,z_i| V)$ given the principal components and the latent variables (the conditioning is also taken over the mean~$\approxMu$, but it is assumed to be known and fixed) under the model $y_i \sim N(P_i (\approxMu + Vz_i), \sigma^2I)$, which is equivalent to minimizing the objective
\begin{equation}
    f^{3DVA}(V,z)=\SumOnetoN \norm{Y_i - P_i (\approxMu + Vz_i)}^2.
\end{equation}
Since the latent variables $z$ are unknown, the EM algorithm is employed to marginalize over the posterior of $z$. However, in practice, 3DVA omits marginalization over $z$ and instead uses a point estimate of~$z$ in the maximization step over~$V$.

RECOVAR \cite{recovar} solves the least-squares problem~\eqref{eq:CovarLSDefinition} for the covariance matrix. Instead of solving the full least-squares problem, only a subset of the columns of the covariance matrix is computed, which is possible due to the diagonal structure of the least-squares operator under the nearest-neighbor approximation of the projection operator $P_i$.
RECOVAR then takes advantage of the low-rank assumption of the covariance matrix, which enables to estimate the eigenvectors from the subset of columns.

SOLVAR has two options for the objective function to be optimized. In the first one, the least squares problem~\eqref{eq:CovarLSDefinition} is rewritten as a function of the principal components $V$ (see Appendix~\ref{subsec:LSDerivation}). Compared to RECOVAR~\cite{recovar}, this approach obviates the heuristic selection of subsets of columns for the computation.
In the second option, SOLVAR maximizes the likelihood given the covariance matrix, given by $\SumOnetoN \log p(y_i| \Sigma)$, under the model $y_i \sim N(P_i \approxMu, \projectedCovar + \noiseCovar)$, which is equivalent to minimizing the objective \eqref{eq:CovarMLDefinition}. The latter objective can also be rewritten in a form amenable to efficient optimization, which results in~\eqref{eq:ML objective} (see Appendix~\ref{subsec:MLDerivation}). Compared to 3DVA~\cite{3dva}, this approach eliminates the need to optimize over the latent variables $z_{ij}$ of~\eqref{eq:linear model}.

\section{Cryobench results}

\subsection{UMAP comparison with existing methods}
We compare the UMAP embedding produced by~\algname~with that of other methods evaluated by Cryobench. Figure~\ref{fig:igg_umap_comparison} shows the UMAP embedding on IgG-1D dataset, and observe that out of the continuous heterogeneity methods, only RECOVAR \cite{recovar} and \algname~are able to fully capture the one-dimensional nature of this dataset. For a comparison of on the remaining datasets in CryoBench, we refer the reader to the CryoBench paper \cite{Cryobenchjeon2025cryobenchdiversechallengingdatasets}.

\begin{figure}
    \centering
    \subfloat[\label{subfig:solvar_umap}]{\includegraphics[width=0.46\textwidth]{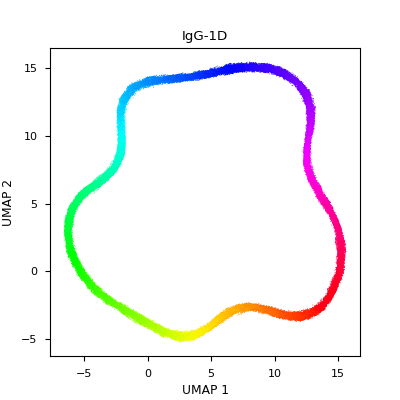}}
    \subfloat[\label{subfig:cryobench_umap_comp}]{\includegraphics[width=0.5\linewidth]{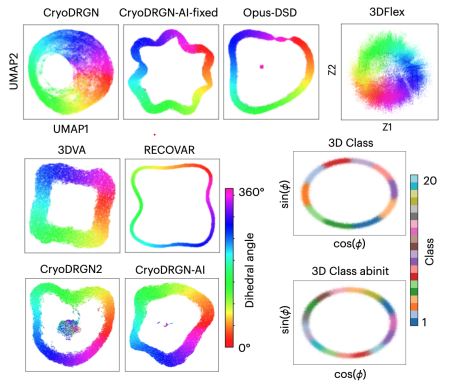}}.
    \caption{Comparison of UMAP embedding of~\algname~\protect\subref{subfig:solvar_umap} and other methods \protect\subref{subfig:cryobench_umap_comp} evaluated by Cryobench on the IgG-1D dataset. Subfigure~\protect\subref{subfig:cryobench_umap_comp} is adapted from
    Jeon et al.~\cite{Cryobenchjeon2025cryobenchdiversechallengingdatasets},
    licensed under CC BY 4.0.}
    \label{fig:igg_umap_comparison}
\end{figure}

\subsection{Tomotwin-100}\label{subsec:tomotwin_appendix}
For the Tomotwin-100 dataset, our method fails to achieve competitive results in Table~\ref{tab:cryobenchFSCAUC}. We note that this dataset is particularly challenging for covariance-based methods (indeed, RECOVAR and 3DVA also fail to achieve good results) since it contains 100 discrete volumes with highly varying structures \cite{Cryobenchjeon2025cryobenchdiversechallengingdatasets}, which requires a high-dimensional linear subspace to capture their variability. 
It is important to note  that single particle cryo-EM datasets like these are somewhat rare, and it is unlikely that such datasets will be accompanied by very good pose estimates if such pose estimates are produced earlier in the pipeline using software that does not already classify the particles when it estimates their poses. Nevertheless, our algorithm can estimate a larger number of principal components in this case, which considerably improves the per-image FSC results from 0.246 using 10 principal components to 0.312 and 0.333 using 25 and 50 principal components, respectively (see Figure~\ref{fig:tomotwin_rank_comparison}), allowing it to achieve the highest per-image FSC metric out of all the methods evaluated by CryoBench.

\begin{figure}
	\centering
	\subfloat[\label{subfig:tomotwin_umap}]{\includegraphics[width=1\textwidth]{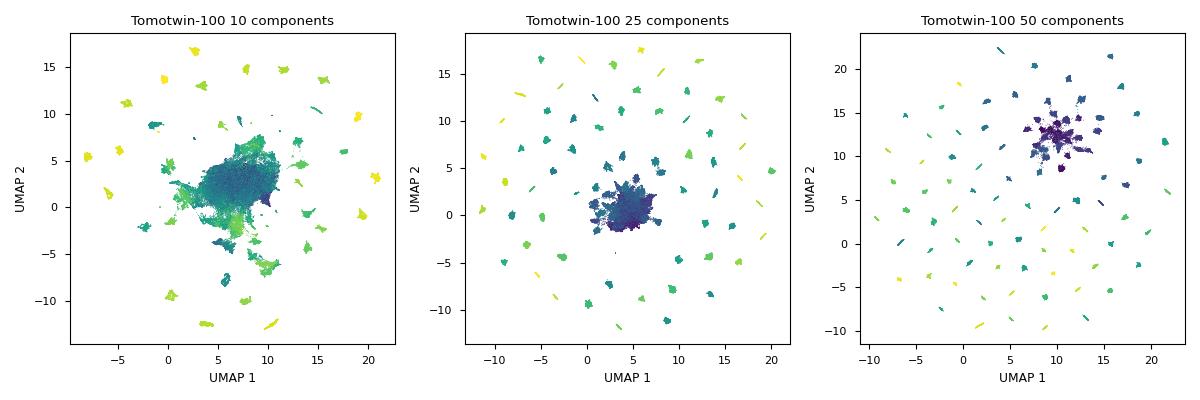}}\\
	\subfloat[\label{subfig:tomotwin_vols1}]{\includegraphics[width=1\textwidth]{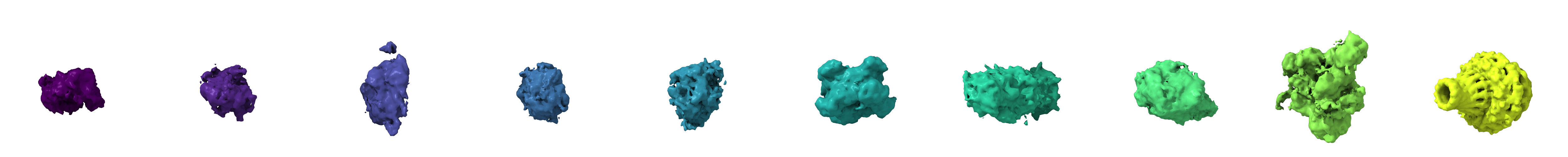}}\\
	\subfloat[]{\includegraphics[width=1\textwidth]{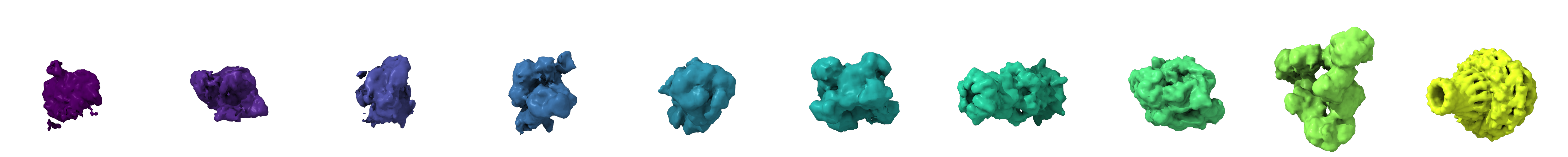}}\\
	\subfloat[\label{subfig:tomotwin_vols3}]{\includegraphics[width=1\textwidth]{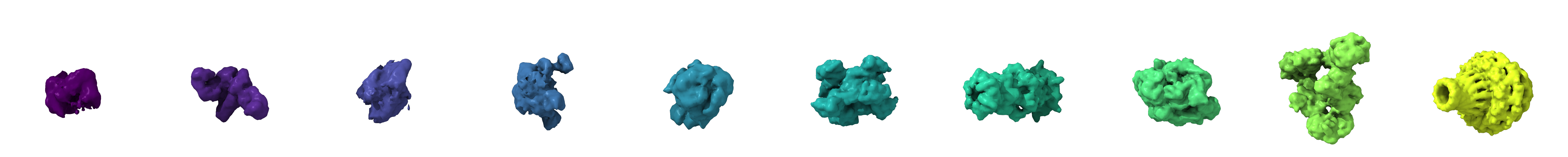}}\\
	\subfloat[\label{subfig:tomotwin_vols4}]{\includegraphics[width=1\textwidth]{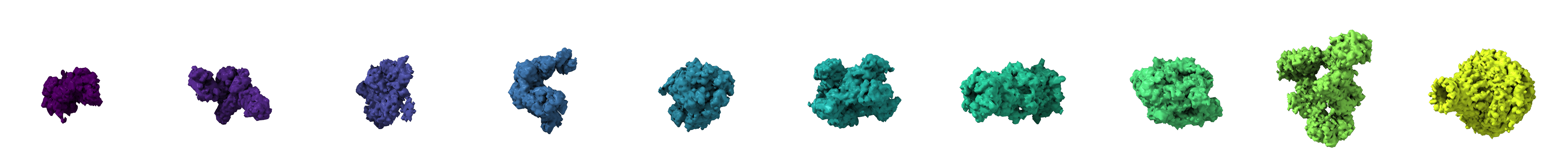}}\\
	\caption{\protect\subref{subfig:tomotwin_umap}~UMAP embedding of Cryobench's Tomotwin-100 dataset using 10, 25, and 50 estimated principal components. A larger number of principal components allows to distinguish between more conformations.
    \protect\subref{subfig:tomotwin_vols1}-\protect\subref{subfig:tomotwin_vols3} Reconstructed volumes of the Tomotwin-100 dataset, using 10,25 and 50 estimated principal components.
    \protect\subref{subfig:tomotwin_vols4} Ground truth states of the Tomotwin-100 dataset.
    }
	\label{fig:tomotwin_rank_comparison}
\end{figure}

\section{Run time analysis}\label{sec:algRunTime}
As mentioned in Section \ref{sec:SOLVAR}, \algname~ is not restricted to a particular formulation of the projection operator $P_i$, which allows the user to select the projection implementation for a potential tradeoff of runtime and accruacy. In Figure~\ref{fig:algRunTime}, we show the optimization runtime for different projection implementations, and the two objectives (least squares and maximum likelihood).
We observe that the nearest and trilinear interpolations are considerablly faster than then using NUFFT for projections (where the FINUFFT library implementation  \cite{CUFINUFFTshih2021cufinufftloadbalancedgpulibrary} is used). Moreover, the difference in runtime between the two different objectives is practically unnoticable, despite the additional $O(r^3)$ term in the maximum liklihood objective. This is because the computional bottleneck still remains applying the projection (and back-projection $P_i^T$) operator.

\begin{figure}
	\centering
	\includegraphics[width=1\textwidth]{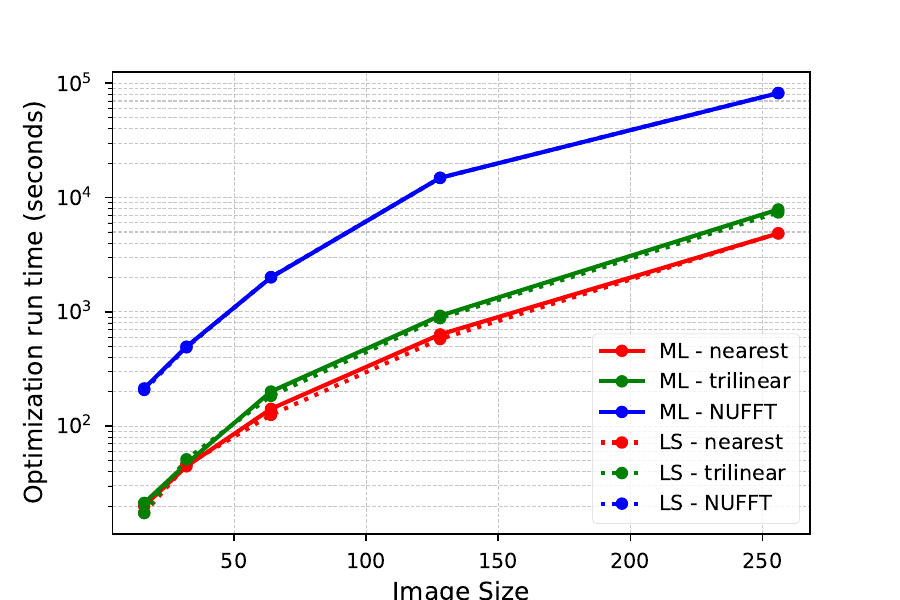}
	\caption{Run time (for estimating $r=10$ leading eigenvectors) as a function of image size using NVIDIA A100 GPU for a dataset of 100,000 images and using 40 epochs. ML and LS are the maximum likelihood and least squares estimators, respectively. Nearest and trilinear are the 0-th and first order interpolation of the projection operator in the Fourier domain (with up-sampling factor of 2);
    for a detailed description of Non-Uniform FFT (NUFFT) see \cite{nufft_doi:10.1137/S003614450343200X}.}
	\label{fig:algRunTime}
\end{figure}

\end{appendices}

\bibliographystyle{plain}
\bibliography{references}

\end{document}